\documentclass[12pt]{article}
\usepackage[a4paper, lmargin=2.9cm, rmargin=2.9cm]{geometry}
\usepackage{amsmath, amsthm, amssymb, dsfont, algorithmic, algorithm, xspace}
\usepackage[breaklinks]{hyperref}
\usepackage[longnamesfirst]{natbib}

\hyphenation{Gold-berg}
\newtheorem{lemma}{Lemma}
\newtheorem{theorem}{Theorem}

\newtheorem{corollary}{Corollary}

\clubpenalty=10000
\widowpenalty=10000

\renewenvironment{proof}%
{\begin{trivlist}\item\textbf{Proof.}}%
{\hspace*{\fill}$\Box$\end{trivlist}}

{\begin{trivlist}\item\textbf{Sketch of Proof.}}%
{\hspace*{\fill}$\Box$\end{trivlist}}

\newenvironment{proofof}[1]%
{\begin{trivlist}\item\textbf{Proof of #1.}}%
{\hspace*{\fill}$\Box$\end{trivlist}}

\newcommand{\ea}{(1+1)~EA\xspace}
\newcommand{\oneoneea}{(1+1)~EA\xspace}
\newcommand{\oneoneeaarg}[1]{(1+1)~EA$(#1)$\xspace}
\newcommand{\oneoneeamu}{(1+1)~EA$_\mu$\xspace}

\newcommand{\wrt}{w.\,r.\,t.\xspace}
\newcommand{\wlo}{w.\,l.\,o.\,g.\xspace}
\newcommand{\whp}{w.\,h.\,p.\xspace}

\newcommand{\ie}{i.\,e.\xspace}
\newcommand{\eg}{e.\,g.\xspace}
\newcommand{\ONEMAX}{\textsc{OneMax}\xspace}
\newcommand{\OneMax}{\ONEMAX}
\newcommand{\BV}{\textsc{BinVal}\xspace}

\newcommand{\smin}{s_{\min}}
\newcommand{\cmax}{s_{\max}}
\newcommand{\tp}{\tilde{p}}
\newcommand{\tx}{\tilde{x}}
\newcommand{\ty}{\tilde{y}}

\DeclareMathOperator{\Prob}{Prob}

\DeclareMathOperator*{\argmin}{arg\,min}

\renewcommand{\epsilon}{\varepsilon}
\newcommand{\R}{\mathds{R}}
\newcommand{\N}{\mathds{N}}

\newcommand{\expect}[1]{\mathord{E}\mathord{\left(#1\right)}}

\newcommand{\ones}[1]{\left\lvert #1\right\rvert_1}

\newcommand{\card}[1]{\lvert #1\rvert}
\newcommand{\poly}{\mathrm{poly}}
\newcommand{\prob}[1]{\mathrm{Prob}(#1)}

\newcommand{\BIGOP}[1]
{
\mathop{\mathchoice%
{\raise-0.22em\hbox{\huge $#1$}}%
{\raise-0.05em\hbox{\Large $#1$}}{\hbox{\large $#1$}}{#1}}}
\newcommand{\bigtimes}{\BIGOP{\times}}
\newcommand{\BIGboxplus}{\mathop{\mathchoice%
{\raise-0.35em\hbox{\huge $\boxplus$}}%
{\raise-0.15em\hbox{\Large $\boxplus$}}{\hbox{\large $\boxplus$}}{\boxplus}}}

\title{Tight Bounds on the Optimization Time 
of the \ea on Linear Functions}

\author{Carsten Witt\\\small DTU Informatics, Technical University of Denmark}
\date{\small\today}
\begin{document}
\maketitle

\begin{abstract}
The analysis of randomized search heuristics on classes of functions
is fundamental for the understanding of the underlying stochastic
process and the development of suitable proof techniques. Recently,
remarkable progress has been made in bounding the expected
optimization time of the simple \ea on the class of linear
functions. We improve the best known bound in this setting from
$(1.39+o(1))en\ln n$ to $en\ln n+O(n)$ in expectation and with high
probability, which is tight up to lower-order terms. Moreover, upper
and lower bounds for arbitrary mutations probabilities~$p$ are
derived, which imply expected polynomial optimization time as long as
$p=O((\ln n)/n)$ and which are tight if $p=c/n$ for a constant~$c$. As
a consequence, the standard mutation probability $p=1/n$ is optimal
for all linear functions, and the \oneoneea is found to be an optimal
mutation-based algorithm.  The proofs are based on adaptive drift
functions and the recent multiplicative drift theorem.
\end{abstract}

\section{Introduction}
The rigorous runtime analysis of randomized search heuristics, in
particular of evolutionary computation, is a growing research area
where many results have been obtained in recent years.  This line of
research started off in the early 1990's \citep{Muhlenbein92} with the
consideration of very simple evolutionary algorithms such as the
well-known \ea on very simple example functions such as the well-known
\OneMax function. Later on, results regarding the runtime on classes
of functions were derived \citep[\eg][]{djwea02, heyao2001,
  WegenerW05a, WegenerW05b} and important tools for the analysis were
developed. Nowadays the state of the art in the field allows for the
analysis of different types of search heuristics on problems from
combinatorial optimization \citep{NeumannWittBook}.

Recently, the analysis of evolutionary algorithms on linear
pseudo-boolean functions has experienced a great renaissance.  The
first proof that the \oneoneea optimizes any linear function in
expected time $O(n\log n)$ by \citet{djwea02} was highly technical
since it did not yet explicitly use the analytic framework of drift
analysis \citep{Hajek1982}, which allowed for a considerably
simplified proof of the $O(n\log n)$ bound, see \citet{He2004} for the
first complete proof using the method.\footnote{Note, however, that
  not the original \ea but a variant rejecting offspring of equal
  fitness is studied in that paper.} Another major improvement was
made by \citet{JJ08}, who for the first time stated bounds on the
implicit constant hidden in the $O(n\log n)$ term. This constant was
finally improved by \citet{DJWLinearRevisited} to the bound
$(1.39+o(1))en\ln n$ using a clean framework for the analysis of
multiplicative drift \citep{DJWMultiplicativeDrift}.  The best known
lower bound for general linear functions with non-zero weights is
$en\ln n-O(n)$ and was also proven by \citet{DJWLinearRevisited},
building upon the case of the \OneMax function analyzed by
\citet{DFW10, DFW11}.

The standard \oneoneea flips each bit with probability~$p=1/n$ but
also different values for the mutation probability~$p$ have been
studied in the literature. Recently, it has been proved by
\citet{DoerrGoldbergAdaptiveAlgorithmica} that the $O(n\log n)$ bound on the
expected optimization time of the \oneoneea still holds (also with
high probability) if $p=c/n$ for an arbitrary constant~$c$.  This
result uses the multiplicative drift framework mentioned above and a
drift function being cleverly tailored towards the particular linear
function. However, the analysis is also highly technical and does not
yield explicit constants in the $O$-term. For $p=\omega(1/n)$, no
runtime analyses were known so far.

In this paper, we prove that the \ea optimizes all linear functions in
expected time $en\ln n+O(n)$, thereby closing the gap between the
upper and the lower bound up to terms of lower order. Moreover, we
show a general upper bound depending on the mutation probability~$p$,
which implies that the expected optimization time is polynomial as
long as $p=O((\ln n)/n)$ (and $p=\Omega(1/\poly(n))$). Since the
expected optimization time is proved to be superpolynomial for
$p=\omega((\ln n)/n)$, this implies a phase transition in the regime
$\Theta((\ln n)/n)$. If the mutation probability is $c/n$ for some
constant~$c$, the expected optimization time is proved to be $(1\pm
o(1))\frac{e^c}{c}n\ln n$. Altogether, we obtain that the standard
choice $p=1/n$ of the mutation probability is optimal for all linear
functions. This is remarkable since this seems to be the choice that
is most often recommended by practitioners in evolutionary
computation \citep{Baeck1993}. In fact, the lower bounds hold for 
the large class of
so-called mutation-based EAs, in which the \oneoneea with $p=1/n$ is
found to be an optimal algorithm.

The proofs of the upper bounds use the recent multiplicative drift
theorem and a drift function that is adapted towards both the linear
function and the mutation probability. As a consequence from our main
result, we obtain the results by \citet{DoerrGoldbergAdaptiveAlgorithmica} with less
effort and explicit constants in front of the $n\ln n$-term. All these
bounds hold also with high probability, which follows from the recent
tail bounds added to the multiplicative drift theorem by
\citet{DoerrGoldbergAdaptiveAlgorithmica}. The lower bounds are based on a new
multiplicative drift theorem for lower bounds.
 
This paper is structured as follows.  Section~\ref{sec:prel} sets up
definitions, notations and other
preliminaries. Section~\ref{sec:main-result} summarizes and explains
the main results.  In Sections~\ref{sec:upper} and~\ref{sec:refined},
respectively, we prove an upper bound for general mutation
probabilities and a refined result for $p=1/n$. Lower bounds are shown
in Section~\ref{sec:lower}. We finish with some conclusions.

\section{Preliminaries}
\label{sec:prel}
The \ea is a basic search heuristic for the optimization of
pseudo-boolean functions $f\colon\{0,1\}^n\to\R$. It reflects the
typical behavior of more complicated evolutionary algorithms, serves
as basis for the study of more complex approaches and is therefore
intensively investigated in the theory of randomized search heuristics
\citep{AugerDoerrBook}.  For the case of minimization, it is defined
as Algorithm~\ref{alg:oneoneea}.

\begin{algorithm}
\caption{\oneoneea}
\label{alg:oneoneea}
\begin{algorithmic}
\STATE $t:=0$.
 \STATE choose uniformly at random an initial bit string $x_0 \in \{0,1\}^n$.
 \REPEAT 
  \STATE create $x'$ by flipping each bit in $x_t$ independently with prob.\ $p$ \emph{(mutation)}.
  \STATE $x_{t+1}:=x'$ if $f(x') \le f(x_t)$, and $x_{t+1}:=x_t$ otherwise \emph{(selection)}. 
  \STATE $t:=t+1$.
 \UNTIL{forever.}
\end{algorithmic}
\end{algorithm}

The \oneoneea can be considered a simple hill-climber where search
points are drawn from a stochastic neighborhood based on the mutation
operator.  The parameter $p$, where $0<p<1$, is often chosen as $1/n$,
which then is called \emph{standard mutation probability}.  We call a
mutation from~$x_t$ to~$x'$ \emph{accepted} if $f(x')\le f(x_t)$, \ie, if
the new search point is taken over; otherwise we call it
\emph{rejected.}  In our theoretical studies, we ignore the fact that
the algorithm in practice will be stopped at some time. The
\emph{runtime} (synonymously, \emph{optimization time}) of the \ea is
defined as the first random point in time~$t$ such that the search
point~$x_t$ has optimal, \ie, minimum $f$\nobreakdash-value. This
corresponds to the number of $f$\nobreakdash-evaluations until
reaching the optimum. In many cases, one is aiming for results on the
expected optimization time.  Here, we also prove results that hold
\emph{with high probability (\whp)}, which means probability $1-o(1)$.

The \oneoneea is also an instantiation of the algorithmic scheme that
is called \emph{mutation-based EA} by \citet{SudholtLowerFitnessLevel}
and is displayed as Algorithm~\ref{alg:mutation-based}. It is a
general population-based approach that includes many variants of
evolutionary algorithms with parent and offspring populations as well
as parallel evolutionary algorithms. Any mechanism for managing the
populations, which are multisets, is allowed as long as the mutation
operator is the only variation operator and follows the independent
bit-flip property with probability $0<p\le 1/2$.  Again the
smallest~$t$ such that $x_t$ is optimal defines the runtime. Sudholt
has proved for $p=1/n$ that no mutation-based EA can locate a unique
optimum faster than the \oneoneea can optimize \OneMax. We will see
that the \oneoneea is the best mutation-based EA on a broad class of
functions, also for different mutation probabilities.

\begin{algorithm}
\caption{Scheme of a mutation-based EA}
\label{alg:mutation-based}
\begin{algorithmic}
\FOR{$t:=0 \to \mu-1$}
\STATE create $x_t\in\{0,1\}^n$  uniformly at random.
\ENDFOR
\REPEAT
\STATE select a parent $x \in \{x_0, \dots , x_t\}$ according to $t$ 
 and $f(x_0), \dots , f(x_t)$.
\STATE create $x_{t+1}$ by flipping each bit in $x$ independently with probability $p\le 1/2$.
\STATE $t := t + 1$.
\UNTIL{forever.}
\end{algorithmic}
\end{algorithm}

Throughout this paper, we are concerned with linear pseudo-boolean
functions. A function $f\colon\{0,1\}^n\to\R$ is called \emph{linear}
if it can be written as $f(x_n,\dots,x_1)=w_nx_n+\dots+w_1x_1 +
w_0$. As common in the analysis of the \ea, we assume \wlo\ that
$w_0=0$ and $w_n\ge\dots\ge w_1>0$ hold. Search points are read from
$x_n$ down to~$x_1$ such that $x_n$, the most significant bit, is said
to be on the left-hand side and $x_1$, the least significant bit, on
the right-hand side. Since it fits the proof techniques more
naturally, we assume also \wlo\ that the \ea (or, more generally, the 
mutation-based EA at hand) is minimizing~$f$,
implying that the all-zeros string is the optimum. Our assumptions do
not lose generality since we can permute bits and negate the weights
of a linear function without affecting the stochastic behavior of the
\oneoneea/mutation-based EA.

The probably most intensively studied linear function is
$\OneMax(x_n,\dots,x_1)={x_n+\dots+x_1}$, occasionally also called the
\emph{CountingOnes} problem (which would be the more appropriate name
here since we will be minimizing the function).  In this paper, we
will see that on the one hand, \OneMax is not only the easiest linear
function definition-wise but also in terms of expected optimization
time. On the other hand, the upper bounds obtained for \OneMax hold
for every linear function up to lower-order terms.  Hence,
surprisingly the \oneoneea is basically as efficient on an arbitrary
linear function as it is on \OneMax. This underlines the robustness of
the randomized search heuristic and, in retrospect and for the future,
is a strong motivation to investigate the behavior of randomized
search heuristics on the \OneMax problem thoroughly.

Our proofs of the forthcoming upper bounds use the multiplicative
drift theorem in its most recent version
(cf.\ \citealp{DJWMultiplicativeDrift} and
\citealp{DoerrGoldbergAdaptiveAlgorithmica}).  The key idea of multiplicative
drift is to identify a time-independent relative progress called
drift.

\begin{theorem} [Multiplicative Drift, Upper Bound]
\label{theo:multdrift-uppper}
Let $S\subseteq\R$ be a finite set of positive numbers
with minimum~$1$. Let $\{X^{(t)}\}_{t\ge 0}$ be a 
sequence of random variables over~$S\cup\{0\}$. 
Let~$T$ be the random first point in time~$t\ge 0$ for which $X^{(t)}=0$. 

Suppose that there exists a $\delta>0$ such that
\[
E(X^{(t)}-X^{(t+1)}\mid X^{(t)}= s)\;\ge\; \delta s
\]
for all $s\in S$ with $\Prob(X^{(t)}= s)>0$.
Then for all $s_0\in S$ with $\Prob(X^{(0)}=s_0)>0$,
\[
E(T\mid X^{(0)}=s_0) \;\le\; \frac{\ln(s_0)+1}{\delta}.
\]
Moreover, it holds that 
$\Prob(T> (\ln(s_0)+t)/\delta)) \le e^{-t} $.
\end{theorem}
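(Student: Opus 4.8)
The plan is to derive both claims from a single contraction estimate for the conditional expectation of $X^{(t)}$, obtaining the tail bound by Markov's inequality and the expectation bound by summing the tail probabilities. I would first rewrite the hypothesis in multiplicative form: the assumed drift inequality is equivalent to $E(X^{(t+1)} \mid X^{(t)} = s) \le (1-\delta)s$ for every $s \in S$ with $\Prob(X^{(t)}=s)>0$, and it holds trivially at $s = 0$ as well. Taking the law of total expectation over the distribution of $X^{(t)}$ and iterating, under the conditioning $X^{(0)} = s_0$, then gives
\[
E(X^{(t)} \mid X^{(0)} = s_0) \;\le\; (1-\delta)^t s_0 \;\le\; e^{-\delta t} s_0,
\]
where the last step uses $1 - \delta \le e^{-\delta}$.

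Second, I would turn this into the tail bound. The decisive structural fact is that $S$ has minimum $1$: on the event $\{T > t\}$ the process has not yet reached $0$, so $X^{(t)} \in S$ and hence $X^{(t)} \ge 1$. Thus $\{T > t\} \subseteq \{X^{(t)} \ge 1\}$, and since $X^{(t)} \ge 0$, Markov's inequality yields $\Prob(T > t \mid X^{(0)} = s_0) \le E(X^{(t)} \mid X^{(0)} = s_0) \le e^{-\delta t} s_0$. Writing the time as $t = (\ln(s_0) + r)/\delta$ collapses the right-hand side to $e^{-r}$, which is exactly the stated tail bound. Without the normalization $\min S = 1$ the inclusion $\{T > t\} \subseteq \{X^{(t)} \ge 1\}$ would break, so this is where the hypothesis is genuinely used.

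Third, for the expectation I would sum the tail, combining $E(T \mid X^{(0)} = s_0) = \sum_{t \ge 0} \Prob(T > t \mid X^{(0)} = s_0)$ with $\Prob(T > t) \le \min\{1,\, s_0(1-\delta)^t\}$. Splitting the sum at the index $k$ where $s_0(1-\delta)^t$ first drops below $1$, bounding the early terms by $1$ and the tail as a geometric series, gives $E(T \mid X^{(0)} = s_0) \le k + s_0(1-\delta)^k/\delta$. I expect the main obstacle to be squeezing the exact constant $(\ln(s_0)+1)/\delta$ out of this expression: replacing $(1-\delta)^k$ by $e^{-\delta k}$ at this point is too lossy and leaves an additive $O(1)$ slack, so one must keep the base $1-\delta$ and track the rounding in $k$ simultaneously. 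Concretely this reduces to a one-variable inequality in the rounding offset $u \in (0, \beta]$, where $\beta := -\ln(1-\delta)$; using $\delta = 1 - e^{-\beta}$, one checks that the relevant difference is concave in $u$ and nonnegative at both endpoints $u = 0$ and $u = \beta$, hence nonnegative throughout. A cleaner alternative that avoids the rounding entirely is to run additive drift on the potential $g(s) = 1 + \ln s$ for $s \ge 1$ and $g(0) = 0$: one verifies $E(g(X^{(t+1)}) \mid X^{(t)} = s) \le g(s) - \delta$ by Jensen's inequality applied to the concave $\ln$, with extra care for $s$ close to the minimum where probability mass may move directly to $0$, and then $E(T \mid X^{(0)} = s_0) \le g(s_0)/\delta$ delivers the bound at once.
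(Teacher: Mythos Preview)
The paper does not prove Theorem~\ref{theo:multdrift-uppper}; it is quoted as a known tool from \citet{DJWMultiplicativeDrift} and \citet{DoerrGoldbergAdaptiveAlgorithmica}, so there is no in-paper argument to compare against.

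That said, your proposal is sound and matches the proofs in those references. The tail bound via the contraction $E(X^{(t)}\mid X^{(0)}=s_0)\le (1-\delta)^t s_0$ together with $\{T>t\}\subseteq\{X^{(t)}\ge 1\}$ and Markov's inequality is exactly the argument of \citet{DoerrGoldbergAdaptiveAlgorithmica}. For the expectation, both routes you sketch work. In your first route, the key one-variable check is that $h(u):=u\delta+(1-\delta)^u$ satisfies $h(0)=h(1)=1$ and is convex in~$u$, hence $h(u)\le 1$ on $[0,1]$; equivalently, the ``difference'' you form is concave and vanishes at the endpoints, which is precisely what you state. Your second route, applying additive drift to $g(s)=1+\ln s$ with $g(0)=0$ and handling the mass that jumps to~$0$ separately before invoking Jensen on the concave logarithm, is essentially the original argument of \citet{DJWMultiplicativeDrift}. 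Either way you recover the exact constant $(\ln(s_0)+1)/\delta$ without slack.
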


As an easy example application, consider the \oneoneea on \OneMax and
let $X^{(t)}$ denote the number of one-bits at time~$t$. As worse
search points are not accepted, $X^{(t)}$ is non-increasing over
time. We obtain ${E(X^{(t)}-X^{(t+1)}\mid X^{(t)}= s)} \ge
s(1/n)(1-1/n)^{n-1} \ge s/(en)$, in other words a multiplicative drift
of at least $\delta=1/(en)$, since there are $s$ disjoint single-bit
flips that decrease the $X$-value by~$1$.
Theorem~\ref{theo:multdrift-uppper} applied with $\delta=1/(en)$ and
$\ln(X^{(0)})\le \ln n$ gives us the upper bound $en(\ln n+1)$ on the
expected optimization time, which is the same as the classical method
of fitness-based partitions \citep{WegenerMethods,
  SudholtLowerFitnessLevel} or coupon collector arguments
\citep{Motwani} would yield.

On a general linear function, it is not necessarily a good choice to
let $X^{(t)}$ count the current number of one-bits. Consider, for
example, the well-known function $\BV(x_n,\dots,x_1) = \sum_{i=1}^n
2^{i-1} x_i$. The \oneoneea might replace the search point
$(1,0,\dots,0)$ by the better search point $(0,1,\dots,1)$, amounting
to a loss of $n-2$ zero-bits. More generally, replacing
$(1,0,\dots,0)$ by a better search point is equivalent to flipping the
leftmost one-bit. In such a step, an expected number of $(n-1)p$
zero-bits flip, which decreases the expected number of zero-bits by
only $1-(n-1)p$. The latter expectation (the so-called additive drift)
is only $1/n$ for the standard mutation probability $p=1/n$ and might
be negative for larger~$p$. Therefore, $X^{(t)}$ is typically defined
as $X^{(t)}:=g(x^{(t)})$, where $x^{(t)}$ is the current search point
at time~$t$ and $g(x_n,\dots,x_1)$ is another linear function called
\emph{drift function} or \emph{potential
  function}. \citet{DJWMultiplicativeDrift} use
$x_1+\dots+x_{n/2}+(5/4)(x_{n/2+1}+\dots+x_n)$ as potential function
in their application of the multiplicative drift theorem. This leads
to a good lower bound on the multiplicative drift on the one hand and
a small maximum value of $X^{(t)}$ on the other hand.  In our proofs
of upper bounds in the Sections~\ref{sec:upper} and~\ref{sec:refined},
it is crucial to define appropriate potential functions.

For the lower bounds in Section~\ref{sec:lower}, we need the following
variant of the multiplicative drift theorem.

\begin{theorem}[Multiplicative Drift, Lower Bound]
\label{theo:multdrift-lower}
Let $S\subseteq\mathds{R}$ be a finite set of positive numbers with
minimum $1$.  Let $\{X^{(t)}\}_{t\ge 0}$ be a sequence of random
variables over~$S$, where $X^{(t+1)}\le X^{(t)}$ for any $t\ge 0$, 
and let $\smin > 0$.  Let $T$ be the random first
point in time $t\ge 0$ for which $X^{(t)}\le \smin$. If there exist
positive reals $\beta,\delta\le 1$ such that for all $s>\smin$ and all
$t\ge 0$ with $\Prob(X^{(t)}=s)>0$ it holds that
\begin{enumerate}
\item $\expect{X^{(t)}-X^{(t+1)}\mid X^{(t)}=s} \;\le\; \delta s$, 
\item $\prob{X^{(t)}-X^{(t+1)}\ge \beta s\mid X^{(t)}=s} \;\le\; \beta\delta/\!\ln s$,
\end{enumerate}
then for all $s_0\in S$ with $\Prob(X^{(0)}=s_0)>0$,
\begin{align*}
  \expect{T\mid X^{(0)}=s_0}\;\ge\; \frac{\ln(s_0)-\ln(\smin)}{\delta}\cdot \frac{1-\beta}{1+\beta}.
\end{align*}
\end{theorem}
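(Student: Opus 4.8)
The plan is to pass to the logarithmic potential $Y^{(t)} := \ln X^{(t)}$ and reduce the claim to an additive drift lower bound for $Y^{(t)}$. Since $X^{(t)}$ is non-increasing and bounded below by~$1$ (the minimum of~$S$), the sequence $Y^{(t)}$ is a non-negative, non-increasing process, and $T$ is exactly the first time at which $Y^{(t)}\le\ln\smin$. If I can show that the one-step decrease of $Y$ satisfies $\expect{Y^{(t)}-Y^{(t+1)}\mid X^{(t)}=s}\le\frac{1+\beta}{1-\beta}\,\delta$ for every $s>\smin$, then a telescoping (optional-stopping) argument yields $\ln s_0-\ln\smin\le\expect{Y^{(0)}-Y^{(T)}}\le\frac{1+\beta}{1-\beta}\,\delta\cdot\expect{T}$, which is precisely the claimed bound after rearranging.

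First I would establish the per-step bound. Conditioning on $X^{(t)}=s$, write the non-negative decrease as $D:=s-X^{(t+1)}$, so that $Y^{(t)}-Y^{(t+1)}=-\ln(1-D/s)$ with $D/s\in[0,1)$. The idea is to split according to whether the jump is small or large, using condition~2 precisely to absorb the logarithmic factor in the large-jump case. For the small jumps $D<\beta s$ I would use the elementary inequality $-\ln(1-z)\le z/(1-\beta)$, valid for $z\in[0,\beta]$ (both sides vanish at $z=0$, and on this range the derivative $1/(1-\beta)$ of the right-hand side dominates $1/(1-z)$), giving $-\ln(1-D/s)\le\frac{D}{s(1-\beta)}$; taking expectations and invoking condition~1, namely $\expect{D\mid X^{(t)}=s}\le\delta s$, bounds the small-jump contribution by $\delta/(1-\beta)$. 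For the large jumps $D\ge\beta s$ I would use the crude bound $-\ln(1-D/s)\le-\ln(1/s)=\ln s$, which holds since $X^{(t+1)}\ge 1$; by condition~2 this event has probability at most $\beta\delta/\!\ln s$, so its contribution is at most $\ln s\cdot\beta\delta/\!\ln s=\beta\delta$. Summing the two contributions gives $\frac{\delta}{1-\beta}+\beta\delta=\frac{1+\beta-\beta^2}{1-\beta}\,\delta\le\frac{1+\beta}{1-\beta}\,\delta$, as required.

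It remains to convert this per-step bound into the hitting-time bound. I would apply the additive drift theorem for lower bounds (or argue directly): by the drift condition the stopped differences $Y^{(t)}-Y^{(t+1)}$ have conditional expectation at most $\frac{1+\beta}{1-\beta}\,\delta$ for every $t<T$, and since $Y^{(T)}\le\ln\smin$ while $Y^{(0)}=\ln s_0$, telescoping over $t=0,\dots,T-1$ and taking expectations yields $\ln s_0-\ln\smin\le\frac{1+\beta}{1-\beta}\,\delta\cdot\expect{T}$, whence the stated inequality. The finiteness of~$S$ guarantees boundedness and hence integrability, so the optional-stopping/Wald step is justified (and if $\expect{T}=\infty$ the bound holds trivially).

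The main obstacle I anticipate is the large-jump term: the whole theorem hinges on condition~2 being tuned so that the unavoidable worst-case $\ln s$ growth of the log-potential is exactly cancelled by the $1/\!\ln s$ factor in the tail probability. Obtaining a clean, self-contained justification of the additive-drift lower bound (the optional-stopping step with a stopping time~$T$ whose drift condition only holds before~$T$) is the other point requiring care, though the monotonicity of $X^{(t)}$ together with the boundedness coming from $S$ being finite make this routine.
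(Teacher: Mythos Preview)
Your proposal is correct and follows essentially the same approach as the paper: pass to the log-potential $Y^{(t)}=\ln X^{(t)}$, split the one-step drift according to whether the jump exceeds $\beta s$, bound the two pieces by $\delta/(1-\beta)$ and $\beta\delta$ respectively, and finish with an additive drift lower bound. Your treatment of the small-jump term via the pointwise inequality $-\ln(1-z)\le z/(1-\beta)$ on $[0,\beta]$ combined with $E[D\,\mathbb{1}\{D<\beta s\}]\le E[D]$ is in fact a bit more direct than the paper's version, which reaches the same $\delta/(1-\beta)$ by Jensen's inequality, $\ln(1+x)\le x$, and an auxiliary lemma to remove the conditioning on the small-jump event.
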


Compared to the upper bound, the lower-bound version includes a
condition on the maximum stepwise progress and requires 
non-increasing sequences. As a technical detail, the
theorem allows for a positive target $\smin$, which is
required in our applications.

\section{Summary of Main Results}
\label{sec:main-result}

We now list the main consequences from the lower bounds and upper
bounds that we will prove in the following sections.

\begin{theorem}
\label{thm:main-theorem}
On any linear function, the following holds for the 
expected optimization time~$E(T_p)$ of the \oneoneea 
with mutation probability~$p$.
\begin{enumerate}
\item  
If $p=\omega((\ln n)/n)$ or $p=o(1/\poly(n))$ then $E(T_p)$ is superpolynomial.
\item 
If $p=\Omega(1/\poly(n))$ and $p=O((\ln n)/n)$ then $E(T_p)$ is polynomial.
\item 
If $p=c/n$ for 
a constant~$c$ then $E(T_p)=(1\pm o(1))\frac{e^c}{c}n\ln n$.
\item $E(T_p)$ is minimized for mutation probability~$p=1/n$ 
if $n$ is large enough.
\item No mutation-based EA has an expected optimization time 
that is smaller than $E(T_{1/n})$ (up to lower-order terms).
\end{enumerate}
\end{theorem}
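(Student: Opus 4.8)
The plan is to reduce all five parts to two master estimates: a general upper bound on $E(T_p)$, valid for every linear function and parametrized by~$p$, obtained from the multiplicative drift theorem (Theorem~\ref{theo:multdrift-uppper}) with a carefully chosen potential function; and a matching lower bound obtained from the lower-bound drift theorem (Theorem~\ref{theo:multdrift-lower}) applied to \OneMax, exploiting that \OneMax is the easiest linear function so that its runtime lower-bounds that of every linear function. Parts~1--4 then follow by specializing~$p$ and optimizing a one-variable expression, and part~5 follows by combining the tight value at $p=1/n$ with a lower bound for the whole class of mutation-based EAs.

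For the upper bound I would use a linear potential $g(x)=\sum_i g_i x_i$ with positive weights $g_i$ that increase with the index at a rate tuned to~$p$, chosen so that $\max_i g_i$ and $\min_i g_i$ differ by only a constant factor (hence, after rescaling the minimum to~$1$, $\ln s_0=(1+o(1))\ln n$). The dominant progress comes from flipping a single current one-bit, which happens with probability $p(1-p)^{n-1}$ and lowers the potential by~$g_i$; summing over the one-bits gives a positive drift of $p(1-p)^{n-1}g(x)$. The obstacle is the \BV-type phenomenon: an accepted step may clear a high-weight bit while turning on several low-weight bits, \emph{raising} the potential. One cannot make every accepted step potential-decreasing — forcing $g_i\ge\sum_{j<i}g_j$ would demand geometrically doubling weights, blowing $\ln(\max_i g_i)$ up to $\Theta(n)$ and yielding only an $O(n^2)$ bound — so instead the weights are tuned so that the \emph{expected} potential increase from accepted multi-bit steps is a lower-order fraction of the single-bit drift, which is possible because every extra flipped bit carries an additional factor of~$p$. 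This yields $\delta=(1-o(1))p(1-p)^{n-1}$; for $p=c/n$ this is $(1\pm o(1))\frac{c}{e^c n}$ and Theorem~\ref{theo:multdrift-uppper} gives $E(T_p)\le(1+o(1))\frac{e^c}{c}n\ln n$, while for general $p$ it gives $E(T_p)=O\!\big(\frac{\ln n}{p}e^{pn}\big)$.

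For the matching lower bound it suffices to treat \OneMax, with $X^{(t)}$ the number of one-bits (non-increasing over time). I would apply Theorem~\ref{theo:multdrift-lower} to the final phase in which $X^{(t)}=o(n)$: there the expected one-step decrease is dominated by single one-bit flips, so $E(X^{(t)}-X^{(t+1)}\mid X^{(t)}=s)=(1+o(1))\,s\,p(1-p)^{n-1}$ and condition~1 holds with $\delta=(1+o(1))\frac{c}{e^c n}$. Condition~2 is immediate, since for $s=o(n)$ the number of flipped bits is $\mathrm{Bin}(n,p)$-concentrated and a decrease of $\beta s$ would require $\Omega(s)$ one-bits to flip at once, an event of superpolynomially small probability. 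Taking $\smin$ constant and the phase boundary at, say, $n/\ln n$ gives $\ln s_0-\ln\smin=(1-o(1))\ln n$, whence $E(T_p)\ge(1-o(1))\frac{e^c}{c}n\ln n$ on \OneMax and thus on every linear function, proving part~3. The crucial point is that the drift theorem is applied only to this final phase; the constant-factor-larger drift occurring while $s=\Theta(n)$ would spoil a global application but affects only an $O(n)$ portion of the runtime.

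Parts~1,~2, and~4 now follow by specialization. For part~1 a clean lower bound suffices: from any non-optimal point the probability that a single mutation produces the all-zeros optimum is at most $p(1-p)^{n-1}$ (one must flip exactly the remaining one-bits), so $E(T_p)\ge 1/(p(1-p)^{n-1})=(1-o(1))e^{pn}/p$, which is superpolynomial both for $pn=\omega(\ln n)$ and for $p=o(1/\poly(n))$; part~2 is exactly the regime in which the upper bound $O\!\big(\frac{\ln n}{p}e^{pn}\big)$ is polynomial. Part~4 is the one-variable optimization $\min_{c>0}e^c/c$, attained at $c=1$, so by part~3 the rate $p=1/n$ minimizes the leading constant among rates $c/n$, and parts~1--2 exclude every rate outside $\Theta(1/n)$. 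For part~5 I would combine the tight value $E(T_{1/n})=(1\pm o(1))en\ln n$ with a lower bound for the entire class of mutation-based EAs: Sudholt's fitness-level argument shows that no such algorithm with $p=1/n$ locates a unique optimum faster than the \oneoneea optimizes \OneMax, and extending this to arbitrary~$p$ (again minimized at $p=1/n$) shows no mutation-based EA beats $E(T_{1/n})$ up to lower-order terms. The main obstacle throughout is the potential-function design in the upper bound: it must simultaneously neutralize arbitrary weight ratios $w_i/w_j$ and an arbitrary mutation rate~$p$ while keeping $\ln s_0=(1+o(1))\ln n$, and pinning down the \emph{exact} constant $e^c/c$ rather than a mere $O(n\log n)$ bound is what forces the $o(1)$-accurate accounting of accepted multi-bit steps.
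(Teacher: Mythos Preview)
Your architecture matches the paper's: upper bounds via Theorem~\ref{theo:multdrift-uppper} with an adaptive potential, lower bounds on \OneMax via Theorem~\ref{theo:multdrift-lower} together with the reduction that \OneMax is easiest (the paper's Theorem~\ref{theo:onemax-easiest}), then specialize. Two steps, however, would not go through as written.

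The upper-bound potential cannot be tuned to~$p$ alone; it must also adapt to~$f$. If the $g_i$ strictly increase with~$i$ independently of~$f$, take $f=\OneMax$: swapping a one-bit at a low index for a zero-bit at a high index is accepted yet \emph{raises} the potential, and with $g_i=(1+\Theta(p(1-p)^{1-n}))^{i-1}$ the aggregate of such swaps dominates the single-flip drift when the one-bits sit at low indices. The paper's fix is the capping $g_i:=\min\{\gamma_i,\,g_{i-1}w_i/w_{i-1}\}$ with $\gamma_i=(1+\alpha p(1-p)^{1-n})^{i-1}$: where the $w$-ratio is flat the $g$-ratio is forced flat, so $f$-neutral moves are $g$-neutral; where the $w$-ratio is steep, $g_i=\gamma_i$ and the geometric growth absorbs the expected backflow from low-index zero-bits. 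Relatedly, to extract the exact constant one must let $\alpha\to\infty$ so that $\delta=(1-1/\alpha)p(1-p)^{n-1}$ is $(1-o(1))p(1-p)^{n-1}$; then $g_n$ grows (polylogarithmically for $p=c/n$), so the $g_i$ do \emph{not} stay within a constant factor, though $\ln s_0=(1+o(1))\ln n$ survives. Separately, your part-1 argument breaks for $p>1/2$: the inequality $p^k(1-p)^{n-k}\le p(1-p)^{n-1}$ holds only for $p\le 1/2$, and for $p$ near~$1$ the one-step success probability can be $\Theta(1)$; the paper handles $p>1/2$ by a dedicated argument (Theorem~\ref{theo:lower-p-larger-half}). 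Finally, $\smin$ cannot be taken constant in the lower bound, since condition~2 of Theorem~\ref{theo:multdrift-lower} is vacuous once $\beta s<1$; the paper uses $\smin=\Theta(\tilde p\,n\ln^2 n)$.
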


In fact, our forthcoming analyses are more precise; in particular, we
do not state available tails on the upper bounds above and leave them
in the more general, but also more complicated
Theorem~\ref{theo:general-upper} in Section~\ref{sec:upper}. The first
statement of our summarizing Theorem~\ref{thm:main-theorem} follows
from the Theorems~\ref{theo:lower-high-p},
\ref{theo:lower-p-larger-half} and \ref{theo:general-lower} in
Section~\ref{sec:lower}.  The second statement is proven in
Corollary~\ref{cor:polynomial-opt}, which follows from the already
mentioned Theorem~\ref{theo:general-upper}. The third statement takes
together the Corollaries~\ref{cor:constant-mut}
and~\ref{cor:constant-mut-lower}. Since $e^c/c$ is minimized for
$c=1$, the fourth statement follows from the third one in conjunction
with Corollary~\ref{cor:constant-mut-lower}. The fifth statement is
also contained in the Theorems~\ref{theo:lower-high-p}
and~\ref{theo:general-lower}.

It is worth noting that the optimality of $p=1/n$ apparently was never
proven rigorously before, not even for the case of \OneMax\footnote{A recent technical report 
extending \citet{SudholtLowerFitnessLevel} shows the optimality of $p=1/n$ in the case of \OneMax using 
a different approach, see \url{http://arxiv.org/abs/1109.1504}.}, where
tight upper and lower bounds on the expected optimization time were
only available for the standard mutation probability
\citep{SudholtLowerFitnessLevel, DFW11}.
For the general case of
linear functions, the strongest previous result said that
$p=\Theta(1/n)$ is optimal \citep{djwea02}. Our result on the
optimality of the mutation probability~$1/n$ is interesting since this
is the commonly recommended choice by practitioners.

\section{Upper Bounds}
\label{sec:upper}

In this section, we show a general upper bound that applies to any
non-trivial mutation probability.

\begin{theorem}
\label{theo:general-upper}
On any linear function, the optimization time of the \ea with mutation
probability $0<p< 1$ is at most
\[
(1-p)^{1-n}\left(\frac{n\alpha^2 (1-p)^{1-n}}{\alpha-1}  + 
\frac{\alpha}{\alpha-1} \frac{\ln(1/p)+(n-1)\ln(1-p)+t}{p}\right) \; =: \; b(t)
\]
with probability at least $1-e^{-t}$, and it is at most $b(1)$ in
expectation, where $\alpha>1$ can be chosen arbitrarily (also
depending on~$n$).
\end{theorem}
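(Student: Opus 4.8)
The plan is to apply the multiplicative drift theorem (Theorem~\ref{theo:multdrift-uppper}) to a carefully chosen potential function, in direct analogy with the \OneMax example but with position-dependent weights. Concretely, I would fix the free parameter $\alpha>1$ and work with a linear potential $g(x_n,\dots,x_1)=\sum_{i=1}^n g_i x_i$ whose weights grow geometrically, $g_i=\kappa^{i-1}$, where the growth rate $\kappa=1+\Theta\!\left(\alpha p/(1-p)^{n-1}\right)$ is tuned to both the mutation probability and~$\alpha$. I then set $X^{(t)}:=g(x^{(t)})$. Note that $X^{(t)}$ need not be monotone, since the algorithm selects on~$f$ and not on~$g$; but Theorem~\ref{theo:multdrift-uppper} does not require monotonicity. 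Moreover $g(x)=0$ holds exactly at the all-zeros optimum and the smallest positive value of~$g$ is $g_1=1$, so that $T$ coincides with the optimization time and the state space has minimum~$1$ as the theorem demands.

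The core of the proof is a single-generation drift lemma asserting $E(X^{(t)}-X^{(t+1)}\mid x^{(t)}=x)\ge\delta\,g(x)$ for every $x\neq 0^n$, with $\delta=(1-1/\alpha)\,p(1-p)^{n-1}$. I would prove it by writing the expected change of potential over one step exactly as $\sum_{i:x_i=1}g_i\,a_i-\sum_{j:x_j=0}g_j\,b_j$, where $a_i$ (resp.\ $b_j$) denotes the probability that bit~$i$ (resp.\ bit~$j$) flips and the step is accepted; the first sum is the \emph{forward} drift from one-bits turning to zero, the second the \emph{backward} drift from zero-bits turning to one. For the forward drift, the event flipping exactly bit~$i$ and nothing else is always accepted, so $a_i\ge p(1-p)^{n-1}$ and the forward sum is at least $p(1-p)^{n-1}g(x)$. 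It then remains to show that the backward sum is at most a $1/\alpha$ fraction of this, yielding the net drift $(1-1/\alpha)\,p(1-p)^{n-1}g(x)$.

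Bounding the backward sum is the main obstacle. The delicate point is that a step creating a new one-bit at position~$j$ can still be accepted, but only if simultaneously enough one-bit weight is removed; since the weights $w_i$ are sorted, this forces heavier, hence (in the geometric potential) larger-$g_i$, one-bits at positions $\ge j$ to flip along. The difficulty is that a single accepted step may create many new one-bits, so \emph{no} deterministic per-step domination of the backward weight by the forward weight can hold: flipping off the single most significant one-bit while flipping on many light zero-bits is accepted yet has huge backward weight. The estimate must therefore be probabilistic. I would bound $\Prob(j\text{ flips to one}\wedge\text{accept})$ for each zero-bit~$j$, charging a factor~$p$ for each additional bit that must flip, and then sum $\sum_j g_j\Prob(\cdot)$, exploiting that the geometric weights make partial sums $\sum_{j\le i}g_j$ telescope to $O(g_i/(\kappa-1))$. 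The choice $\kappa-1=\Theta(\alpha p/(1-p)^{n-1})$ is precisely what forces the telescoped backward contribution down to a $1/\alpha$ fraction of the forward drift, and it is here that the factor $(1-p)^{n-1}$ enters.

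Finally, I would close the argument by bounding the initial value $s_0=g(x^{(0)})\le\sum_{i=1}^n\kappa^{i-1}=(\kappa^n-1)/(\kappa-1)$, so that $\ln s_0\le \ln(1/p)+(n-1)\ln(1-p)+n(\kappa-1)$, and by substituting $\delta$ and $\ln s_0$ into both parts of Theorem~\ref{theo:multdrift-uppper}. The expectation bound $(\ln s_0+1)/\delta$ and the tail bound $\Prob(T>(\ln s_0+t)/\delta)\le e^{-t}$ then rearrange, after routine algebra, into $b(1)$ and $b(t)$ respectively; the two summands of $b(t)$ correspond exactly to the geometric-growth term $n(\kappa-1)$ and to the base spread $\ln(1/p)+(n-1)\ln(1-p)+t$ in $\ln s_0$, each divided by~$\delta$, with the extra factor~$\alpha$ from $1/\delta$ producing the $\alpha^2$ in the first summand.
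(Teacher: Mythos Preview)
Your overall framework---define a linear potential $g$, show multiplicative drift $\delta=(1-1/\alpha)p(1-p)^{n-1}$, bound $\ln X^{(0)}$, and plug into Theorem~\ref{theo:multdrift-uppper}---matches the paper exactly, and your calculation of $\ln s_0$ and the final algebra are correct. The gap is in the choice of potential: a \emph{purely geometric} $g_i=\kappa^{i-1}$, independent of the weights~$w_i$, does \emph{not} yield the claimed drift bound for all linear functions.

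Here is a concrete failure. Take $f=\OneMax$ (all $w_i$ equal) and the current search point with a single one-bit at position~$1$, so $g(x)=g_1=1$. Since all weights are equal, any mutation that flips bit~$1$ to~$0$ and exactly one zero-bit~$j\ge 2$ to~$1$ is accepted; it changes $g$ by $1-\kappa^{j-1}<0$. The expected one-step change of~$g$ is
\[
E(\Delta_t)=p(1-p)^{n-1}\cdot 1+\sum_{j=2}^{n}p^2(1-p)^{n-2}\bigl(1-\kappa^{j-1}\bigr),
\]
since all other mutations are either rejected or the identity. With $\kappa-1=\alpha p(1-p)^{1-n}$ and, say, $p=1/n$, one has $\sum_{j=2}^n\kappa^{j-1}=\Theta(n)\cdot(e^{\alpha e}-1)/(\alpha e)$, so the negative part dominates and $E(\Delta_t)<0$ for every fixed $\alpha>1$ once $n$ is large. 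Thus the backward sum is \emph{not} at most a $1/\alpha$ fraction of the forward drift; your heuristic ``acceptance forces one-bits at positions $\ge j$ to flip along'' breaks precisely because equal $w$-weights allow a light one-bit to be swapped for a heavy-$g$ zero-bit at no $f$-cost.

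The paper fixes exactly this by making the potential \emph{adaptive}: it sets $g_i=\min\{\gamma_i,\,g_{i-1}w_i/w_{i-1}\}$ with $\gamma_i=\kappa^{i-1}$. Whenever $w_i=w_{i-1}$ this forces $g_i=g_{i-1}$, so positions that are interchangeable for~$f$ are also interchangeable for~$g$, killing the counterexample above. More generally, for each one-bit~$i$ the paper identifies the rightmost position $k(i)\le i$ where the geometric cap is active, splits the zero-bits into $L(i)=\{k(i),\dots,n\}\cap Z$ and $R(i)=\{1,\dots,k(i)-1\}\cap Z$, and shows that on~$L(i)$ the ratios $g_j/g_{k(i)}\le w_j/w_{k(i)}$ make the $g$-contribution non-negative whenever the $f$-acceptance condition holds. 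Only the bits in~$R(i)$ are then charged the crude $p\sum g_j$ bound, and there the geometric telescoping you describe does go through. Your proposal is essentially trying to put \emph{all} zero-bits into~$R(i)$, which is too coarse.
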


Before we prove the theorem, we note two important consequences in
more readable form. The first one (Corollary~\ref{cor:constant-mut})
displays upper bounds for mutation probabilities $c/n$. The second
one (Corollary~\ref{cor:polynomial-opt}) is used in
Theorem~\ref{thm:main-theorem} above, which states a phase transition
from polynomial to superpolynomial expected optimization times at
mutation probability $p=\Theta((\ln n)/n)$.

\begin{corollary}
\label{cor:constant-mut}
On any linear function, the optimization time of the \ea with mutation
probability $p=c/n$, where $c>0$ is a constant, is bounded from above
by $(1+o(1)) ((e^c/c) n\ln n)$ with probability $1-o(1)$ and also in
expectation.
\end{corollary}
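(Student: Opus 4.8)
The plan is to derive Corollary~\ref{cor:constant-mut} as a direct specialization of Theorem~\ref{theo:general-upper} by substituting $p=c/n$ and choosing the free parameter~$\alpha$ appropriately, then carrying out an asymptotic expansion of the resulting expression $b(t)$. First I would compute the limiting behavior of the factor $(1-p)^{1-n}=(1-c/n)^{1-n}$. Since $(1-c/n)^{-n}\to e^{c}$ and $(1-c/n)^{1}\to 1$, this factor tends to $e^{c}$, and likewise $(1-p)^{1-n}$ inside the parenthesis contributes another factor tending to $e^{c}$. I would therefore track these two occurrences carefully, as their product governs the $e^{c}$ appearing in the final bound.

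Next I would analyze the logarithmic term $\ln(1/p)+(n-1)\ln(1-p)+t$. With $p=c/n$ we have $\ln(1/p)=\ln(n/c)=\ln n-\ln c$, and $(n-1)\ln(1-c/n)=-(n-1)(c/n)+O(1/n)=-c+o(1)$, so the numerator is $\ln n+O(1)+t$. Dividing by $p=c/n$ gives $(n/c)(\ln n+O(1)+t)=(n\ln n)/c\,(1+o(1))$ once $t$ is taken to be a constant (namely $t=1$ for the expectation bound, or any $t=o(\ln n)$ for the high-probability version). Multiplying by the leading $\alpha/(\alpha-1)$ and the two factors of $(1-p)^{1-n}\to e^{c}$ yields the dominant contribution $\frac{\alpha}{\alpha-1}\,\frac{e^{c}}{c}\,n\ln n\,(1+o(1))$, so I would choose $\alpha=\alpha(n)\to 1$ slowly enough that $\alpha/(\alpha-1)=1+o(1)$, for instance $\alpha=1+1/\ln n$, which drives the prefactor to~$1$.

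I would then check that the other summand, $\frac{n\alpha^{2}(1-p)^{1-n}}{\alpha-1}$, is only a lower-order term under this choice of~$\alpha$. With $\alpha=1+1/\ln n$ we have $1/(\alpha-1)=\ln n$ and $(1-p)^{1-n}\to e^{c}$, so this term is $\Theta(n\ln n)$ — at first glance the same order as the main term, which is the point requiring care. However, after multiplication by the outer factor $(1-p)^{1-n}\to e^{c}$ it contributes $\Theta(e^{2c}n\ln n)$, which is \emph{not} $o(n\ln n)$ for constant~$c$, so a cruder choice of~$\alpha$ will not suffice. The resolution is to let $\alpha$ tend to~$1$ fast enough to suppress the first summand while still keeping $\alpha/(\alpha-1)=1+o(1)$; concretely one needs $1/(\alpha-1)=o(\ln n)$ in the first summand yet $1/(\alpha-1)$ need not appear multiplicatively in the main term beyond the factor $\alpha/(\alpha-1)$. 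I would balance the two by writing $\alpha-1=\eps(n)$ and seeking $\eps(n)\to 0$ with $\alpha/(\alpha-1)=1+O(\eps)=1+o(1)$ (automatic) while $n/(\alpha-1)=n/\eps=o(n\ln n)$, i.e. $\eps=\omega(1/\ln n)$; combined with $\eps=o(1)$ this is satisfiable, e.g. $\eps=1/\sqrt{\ln n}$.

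The main obstacle, then, is this joint optimization of~$\alpha$: the first summand pushes $\alpha$ away from~$1$ while the prefactor $\alpha/(\alpha-1)$ of the main term pushes it towards~$1$, and I must exhibit a single choice $\alpha=1+\Theta(1/\sqrt{\ln n})$ (or similar) making \emph{both} the prefactor $1+o(1)$ and the first summand $o(n\ln n)$ after accounting for the outer factor~$e^{c}$. Once such an $\alpha$ is fixed, the remaining steps are routine asymptotic bookkeeping: collecting the factors $e^{c}$, confirming the $(1+o(1))$ envelope, and noting that the high-probability statement with probability $1-e^{-t}\to 1$ follows by taking $t=\omega(1)$ with $t=o(\ln n)$ (so the $t/p$ term stays lower order), while the expectation bound follows from $b(1)$ with the same choice of~$\alpha$.
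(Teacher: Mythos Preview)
Your approach has the right overall shape but contains a fatal algebraic slip in the choice of~$\alpha$. You write $\alpha-1=\epsilon(n)$ with $\epsilon\to 0$ and claim $\alpha/(\alpha-1)=1+O(\epsilon)$; in fact with $\alpha=1+\epsilon$ one has
\[
\frac{\alpha}{\alpha-1}\;=\;\frac{1+\epsilon}{\epsilon}\;=\;\frac{1}{\epsilon}+1\;\to\;\infty,
\]
not $1+o(1)$. The prefactor $\alpha/(\alpha-1)$ multiplying the dominant second summand of $b(t)$ tends to~$1$ only when $\alpha\to\infty$, never when $\alpha\to 1$. Consequently your proposed choices $\alpha=1+1/\ln n$ or $\alpha=1+1/\sqrt{\ln n}$ blow up the main term by a factor of order $\ln n$ or $\sqrt{\ln n}$, respectively, and the bound you derive is not $(1+o(1))(e^c/c)\,n\ln n$.

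The paper resolves the tension you identified by sending $\alpha\to\infty$ slowly, taking $\alpha=\ln\ln n$. Then $\alpha/(\alpha-1)=1+O(1/\ln\ln n)=1+o(1)$ handles the main term, while $\alpha^2/(\alpha-1)=O(\ln\ln n)$ keeps the first summand at $O(n\ln\ln n)=o(n\ln n)$. Your intuition that the two summands pull $\alpha$ in opposite directions is correct, but the correct regime is $\alpha\to\infty$ with $\alpha=o(\ln n)$, the opposite of what you propose. A secondary point: only the \emph{first} summand of $b(t)$ carries two copies of $(1-p)^{1-n}$; the dominant second summand has just the outer one, which is exactly what yields the single $e^c$ in the target bound.
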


\begin{proof}
Let $\alpha:=\ln\ln n$ or any other sufficiently slowly growing
function.  Then $\alpha/(\alpha-1) = 1 + O(1/\!\ln\ln n)$ and
$\alpha^2/(\alpha-1) = O(\ln \ln n)$.  Moreover, $(1-c/n)^{1-n} \le
e^c$. The $b(t)$ in Theorem~\ref{theo:general-upper} becomes at most
\[
e^c\cdot \left(O(n\ln\ln n) + (1+o(1)) \frac{n(\ln(n)+\ln(1/c) + t)}{c}\right),
\]
and the corollary follows by choosing, \eg, $t:=\ln\ln n$.
\end{proof}

\begin{corollary}
\label{cor:polynomial-opt}
On any linear function, the optimization time of the \ea with mutation
probability $p=O((\ln n)/n)$ and $p=\Omega(1/\poly(n))$ is polynomial
with probability $1-o(1)$ and also in expectation.
\end{corollary}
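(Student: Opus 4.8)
The plan is to invoke Theorem~\ref{theo:general-upper} and to verify that the bound $b(t)$ remains polynomial in~$n$ under the two-sided constraint on~$p$, for a suitable choice of the free parameter~$\alpha$ and of~$t$. Since $\alpha>1$ may be chosen arbitrarily, I would simply fix $\alpha$ to be a constant, say $\alpha:=2$, so that both $\alpha/(\alpha-1)$ and $\alpha^2/(\alpha-1)$ become constants and drop out of the asymptotic bookkeeping entirely.

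The core of the argument is to bound each of the $p$-dependent quantities in $b(t)$ and to check that their combination stays polynomial. First I would control the prefactor $(1-p)^{1-n}$. Using the standard estimate $-\ln(1-p)\le p/(1-p)$, we get $-(n-1)\ln(1-p)\le (n-1)p/(1-p)$. Since $p=O((\ln n)/n)$ forces $(n-1)p=O(\ln n)$ and $1/(1-p)=O(1)$ (because $p\to 0$), this is $O(\ln n)$, so that $(1-p)^{1-n}=e^{O(\ln n)}=n^{O(1)}$ is polynomial. The very same estimate shows that the term $(n-1)\ln(1-p)$ occurring inside the numerator has magnitude only $O(\ln n)$.

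Next I would treat the numerator $\ln(1/p)+(n-1)\ln(1-p)+t$. The constraint $p=\Omega(1/\poly(n))$ gives $1/p=O(\poly(n))$ and hence $\ln(1/p)=O(\log n)$, while the middle term is $O(\ln n)$ as just argued; choosing $t:=\ln n$ for the high-probability statement (and $t:=1$ for the expectation) keeps $t=O(\log n)$, so the whole numerator is $O(\log n)$. Dividing by~$p$ and using $1/p=O(\poly(n))$ once more, the second summand of $b(t)$ is $O(\poly(n)\cdot\log n)=\poly(n)$. The first summand $n\alpha^2(1-p)^{1-n}/(\alpha-1)$ equals $O(n)\cdot n^{O(1)}=\poly(n)$, since its prefactor is polynomial and $\alpha$ is constant. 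Multiplying the whole bracket by the outer factor $(1-p)^{1-n}$ a second time preserves the polynomial regime, so $b(\ln n)=\poly(n)$ and $b(1)=\poly(n)$. The probability guarantee then follows because $t=\ln n$ yields failure probability at most $e^{-t}=1/n=o(1)$, and the expectation bound is $b(1)$, polynomial by the same estimates.

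I do not expect a genuine obstacle here: the statement is essentially a verification that neither $(1-p)^{1-n}$ nor $1/p$ can escape the polynomial regime under the constraints $p=O((\ln n)/n)$ and $p=\Omega(1/\poly(n))$. The only mild subtlety is to remember that the factor $(1-p)^{1-n}$ appears once outside the bracket and once inside the first summand, so it is effectively squared there; this is harmless, since the square of a polynomial is again polynomial. A second minor point is to keep $\alpha$ constant rather than slowly growing (as in Corollary~\ref{cor:constant-mut}), because here we only need polynomiality and gain nothing from sharpening the $\alpha/(\alpha-1)$ factor.
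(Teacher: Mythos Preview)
Your proposal is correct and follows essentially the same approach as the paper: both fix $\alpha:=2$, bound $(1-p)^{1-n}$ by $n^{O(1)}$ via $p=O((\ln n)/n)$, bound $1/p$ and $\ln(1/p)$ polynomially via $p=\Omega(1/\poly(n))$, and take $t=1$ for the expectation and $t=\ln n$ for the high-probability statement. The paper's proof is slightly more compact (it crudely multiplies the positive factors and writes $(1-p)^{2-2n}\le e^{2pn}\le n^{2c}$ in one step), but the substance is identical.
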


\begin{proof}
Let $\alpha:=2$. By making all positive terms at least~$1$ and
multiplying them, we obtain that the upper bound $b(t)$ from
Theorem~\ref{theo:general-upper} is at most
\[
8n (1-p)^{2-2n} \cdot \frac{\ln(e/p)+t}{p} \;\le\; 8n e^{2pn} \cdot \frac{\ln(e/p)+t}{p}. 
\]
Assume $1/p=\Omega(\poly(n))$ and $p\le c(\ln n)/n$ for some
constant~$c$ and sufficiently large~$n$. Then $e^{2pn} \le n^{2c}$ and
the whole expression is polynomial for $t=1$ (proving the expectation)
and also if $t=\ln n$ (proving the probability $1-o(1)$).
\end{proof}

The proof of Theorem~\ref{theo:general-upper} uses an adaptive
potential function as in \citet{DoerrGoldbergAdaptiveAlgorithmica}. That is, the
random variables $X^{(t)}$ used in Theorem~\ref{theo:multdrift-uppper}
map the current search point of the \oneoneea via a potential function
to some value in a way that depends also on the linear function at
hand. As a special case, if the given linear function happens to be
\OneMax, $X^{(t)}$ just counts the number of one-bits at time~$t$.
The general construction shares some similarities with the one in
\citet{DoerrGoldbergAdaptiveAlgorithmica}, but both construction and proof are less
involved.

\begin{proofof}{Theorem~\ref{theo:general-upper}}
Let $f(x)=w_nx_n+\dots+w_1x_1$ be the linear function at hand. 
Define 
\[
\gamma_i := \left(1+\frac{\alpha p}{(1-p)^{n-1}}\right)^{i-1}
\]
for $1\le i\le n$, and let $g(x)=g_nx_n+\dots+g_1x_1$ be the potential
function defined by $g_1:=1=\gamma_1$ and
\[
g_i \;:=\; \mathord{\min}\mathord{\left\{\gamma_i,\, g_{i-1}\cdot \frac{w_i}{w_{i-1}}\right\}} 
\]
for $2\le i\le n$. Note that the $g_i$ are non-decreasing \wrt~$i$.
Intuitively, if the ratio of $w_i$ and $w_{i-1}$ is too extreme, the
minimum function caps it appropriately, otherwise $g_i$ and $g_{i-1}$
are in the same ratio.  We consider the stochastic process
$X^{(t)}:=g(a^{(t)})$, where $a^{(t)}$ is the current search point of
the \ea at time~$t$. Obviously, $X^{(t)}=0$ if and only if $f$ has
been optimized.

Let $\Delta_t:= X^{(t)}-X^{(t+1)}$. We will show below that
\begin{equation}
E(\Delta_t\mid X^{(t)}= s) \;\ge\;  s\cdot p \cdot (1-p)^{n-1} \cdot \left(1-\frac{1}{\alpha}\right).
\tag{$\ast$}
\label{eq:drift-general-case}
\end{equation}
 The initial value satisfies  
\begin{align*}
X^{(0)}& \;\le\; g_n+\dots+g_1 \;\le\;  \sum_{i=1}^{n} \gamma^{i} 
\;\le\; \frac{\left(1+\frac{\alpha p}{(1-p)^{n-1}}\right)^{n}   -1}{\alpha p(1-p)^{1-n}} \;\le\;
\frac{e^{n\alpha p(1-p)^{1-n}}}{\alpha p (1-p)^{1-n}},
\end{align*}
which means
\[
\ln(X^{(0)}) \;\le\; n\alpha p(1-p)^{1-n} + \ln(1/p) + \ln((1-p)^{n-1}).
\]
The multiplicative drift theorem (Theorem~\ref{theo:multdrift-uppper})
yields that the optimization time~$T$ is bounded from above by
\begin{align*}
  \frac{\ln(X_0)+t}{p (1-p)^{n-1} (1-1/\alpha)} 
 \;\le\; \frac{\alpha\left(n\alpha p(1-p)^{1-n} + \ln (1/p) +  \ln((1-p)^{n-1}) +t\right)}{(\alpha-1)p(1-p)^{n-1}} 
\;=\; b(t)
\end{align*}
with probability at least $1-e^{-t}$, and $E(T)=b(1)$, which proves
the theorem.

To show \eqref{eq:drift-general-case}, we fix an arbitrary current
value~$s$ and an arbitrary search point~$a^{(t)}$ satisfying
$g(a^{(t)})=s$ . In the following, we implicitly assume $X^{(t)}=s$
but mostly omit this for the sake of readability.  We denote by
$I:=\{i\mid a^{(t)}_i=1\}$ the index set of the one-bits in $a^{(t)}$
and by $Z:=\{1,\dots,n\}\setminus I$ the zero-bits.  We assume $I\neq
\emptyset$ since there is nothing to show otherwise. Denote by $a'$
the random (not necessarily accepted) offspring produced by the \ea
when mutating $a^{(t)}$ and by $a^{(t+1)}$ the next search point after
selection. Recall that $a^{(t+1)}=a'$ if and only if $f(a')\le
f(a^{(t)})$. In the following, we will use the event~$A$ that
$a^{(t+1)}=a'\neq a^{(t)}$ since obviously $\Delta_t=0$ otherwise. Let
$I^*:=\{i\in I\mid a'_i=0\}$ be the random set of flipped one-bits and
$Z^*:=\{i\in Z\mid a'_i=1\}$ be the set of flipped zero-bits in~$a'$
(not conditioned on~$A$).  Note that $I^*\neq \emptyset$ if $A$
occurs.

We need further definitions to analyze the drift carefully. For $i\in
I$, we define $k(i):=\max\{j\le i\mid g_j = \gamma_j\}$ as the most
significant position to the right of~$i$ (possibly~$i$ itself) where
the potential function might be capping; note that $k(i)\ge 1$ since
$g_1=\gamma_1$. Let $L(i):=\{k(i),\dots,n\}\cap Z$ be the set of
zero-bits left of (and including) $k(i)$ and let $R(i):=
\{1,\dots,k(i)-1\}\cap Z$ be the remaining zero-bits. Both sets may be
empty.  For event~$A$ to occur, it is necessary that there is
some~$i\in I$ such that bit~$i$ flips to zero and
\[
\sum_{j\in I^*} w_j - \sum_{j\in Z^*\cap L(i)} w_j \;\ge\; 0
\]
since we are taking only zero-bits out of consideration. Now, for
$i\in I$, let $A_i$ be the event that
\begin{enumerate}
\item $i$ is the leftmost flipping one-bit (\ie, $i\in I^*$ and
  $\{i+1,\dots,n\}\cap I^*=\emptyset$) and
\item $\sum_{j\in I^*} w_j - \sum_{j\in Z^*\cap L(i)} w_j \;\ge\; 0$.
\end{enumerate}
If none of the $A_i$ occurs, $\Delta_t=0$. Furthermore, the $A_i$ are
mutually disjoint.

For any $i\in I$, $\Delta_t$ can be written as the sum of the two
terms
\[ 
\Delta_L(i) \;:=\; \sum_{j\in I^*} g_j - \sum_{j\in Z^*\cap L(i)} g_j 
\quad\text{and}\quad
\Delta_R(i) \;:=\; -\sum_{j\in Z^*\cap R(i)} g_j.
\]
By the law of total probability and the linearity of expectation, we
have
\begin{equation}
\label{eq:decomposition-left-right}
\tag{$\ast\ast$}
E(\Delta_t) \;=\;\sum_{i\in I} 
 E(\Delta_L(i) \mid A_i) \cdot \Prob(A_i) 
+E(\Delta_R(i) \mid A_i)  \cdot \Prob(A_i). 
\end{equation}
In the following, the bits in~$R(i)$ are pessimistically assumed to
flip to~$1$ independently with probability~$p$ each if $A_i$ happens.
This leads to $E(\Delta_R(i) \mid A_i) \ge -p\sum_{j\in R(i)}g_j$.

In order to estimate $E(\Delta_L(i))$, we carefully inspect the
relation between the weights of the original function and the
potential function. By definition, we obtain $g_j/g_{k(i)} =
w_j/w_{k(i)}$ for $k(i)\le j\le i$ and $g_j/g_{k(i)} \le w_j/w_{k(i)}$
for $j>i$ whereas $g_{j}/g_{k(i)} \ge w_{j}/w_{k(i)}$ for $j<
k(i)$. Hence, if $A_i$ occurs then $g_j \ge g_{k(i)}\cdot
\frac{w_j}{w_{k(i)}}$ for $j\in I^*$ (since $i$ is the leftmost
flipping one-bit) whereas $g_j \le g_{k(i)}\cdot \frac{w_j}{w_{k(i)}}$
for $j\in L(i)$. Together, we obtain under~$A(i)$ the nonnegativity of
the random variable $\Delta_L(i)$:
\begin{align*}
\Delta_L(i) \mid A_i & \;=\; \sum_{j\in I^*\mid A_i} g_j - \sum_{j\in (Z^*\cap L(i))\mid A_i}  g_j \\
& \;\ge \;
\sum_{j\in I^*\mid A_i} g_{k(i)}\cdot \frac{w_j}{w_{k(i)}} - 
\sum_{j\in (Z^*\cap L(i))\mid A_i}  g_{k(i)}\cdot \frac{w_j}{w_{k(i)}} \;\ge\; 0
\end{align*}
using the definition of~$A_i$.

Now let $S_i:=\{\card{Z^*\cap L(i)}=0\}$ be the event that no zero-bit
from $L(i)$ flips. Using the law of total probability, we obtain that
\begin{multline*}
E(\Delta_L(i) \mid A_i) \cdot \Prob(A_i)  \;= \; 
E(\Delta_L(i) \mid A_i\cap S_i) \cdot \Prob(A_i\cap S_i) \\
 \;+\; E(\Delta_L(i) \mid A_i\cap \overline{S_i})  \cdot 
\Prob(A_i\cap \overline{S_i}).
\end{multline*}
Since $\Delta_L(i) | A_i \;\ge\; 0$, the conditional expectations are
non-negative. We bound the second term on the right-hand side
by~$0$. In conjunction with \eqref{eq:decomposition-left-right}, we
get
\[
E(\Delta_t) \;\ge\; 
\sum_{i\in I}  
 E(\Delta_L(i) \mid A_i\cap S_i) \cdot \Prob(A_i\cap S_i) 
+ E(\Delta_R(i) \mid A_i) \cdot  \Prob(A_i).
\]

Obviously, $E(\Delta_L(i) \mid A_i\cap S_i) \ge g_i$.  We estimate
$\Prob(A_i\cap S_i) \ge p (1-p)^{n-1}$ since it is sufficient to flip
only bit~$i$ and $\Prob(A_i)\le p$ since it is necessary to flip this
bit. Further above, we have bounded $E(\Delta_R(i) \mid A_i)$.  Taking
everything together, we get
\begin{align*}
E(\Delta_t) & 
\;\ge \; 
\sum_{i\in I} \left(p (1-p)^{n-1} g_{i} - p^2\sum_{j\in R(i)} g_j\right) \\
 & \;\ge\; \sum_{i\in I} \left(p (1-p)^{n-1} \frac{g_{i}}{g_{k(i)}}\gamma_{k(i)} - p^2\sum_{j=1}^{k(i)-1} \gamma_j\right).
\end{align*}
The term for $i$ equals
\vspace{-0.5\baselineskip}
\begin{align*}
& p (1-p)^{n-1}  \frac{g_{i}}{g_{k(i)}} \left(1+\frac{\alpha p}{(1-p)^{n-1}}\right)^{k(i)-1} 
- \frac{p^2\cdot \left(\left(1+\frac{\alpha p}{(1-p)^{n-1}}\right)^{k(i)-1} - 1\right)}{\left(\frac{\alpha p}{(1-p)^{n-1}}\right)} \\
& \;\ge\; \left(1-\frac{1}{\alpha}\right) p(1-p)^{n-1}  \frac{g_{i}}{g_{k(i)}}\left(1+\frac{\alpha p}{(1-p)^{n-1}}\right)^{k(i)-1} 
 \;= \; \left(1-\frac{1}{\alpha}\right) p (1-p)^{n-1} g_{i},
\end{align*}
where the inequality uses $g_{i}\ge g_{k(i)}$. Hence,
\[
E(\Delta_t) \; \ge \; 
\sum_{i\in I}
\left(1-\frac{1}{\alpha}\right) p (1-p)^{n-1} g_{i} 
\;=\;
\left(1-\frac{1}{\alpha}\right) p (1-p)^{n-1} g(a^{(t)}),
\]
which proves \eqref{eq:drift-general-case}, and, therefore, the
theorem.
\end{proofof}

\section{Refined Upper Bound for Mutation Probability $1/n$}
\label{sec:refined}
In this section, we consider the standard mutation probability $p=1/n$
and refine the result from Corollary~\ref{cor:constant-mut}.  More
precisely, we obtain that the lower order-terms are $O(n)$. The proof
will be shorter and uses a simpler potential function.

\begin{theorem}
\label{theo:refined}
\begin{sloppypar}
On any linear function, the expected optimization time of the \ea with
$p=1/n$ is at most $en\ln n + 2en + O(1)$, and the probability that
the optimization time exceeds $en\ln n + (1+t)en + O(1)$ is at most
$e^{-t}$.
\end{sloppypar}
\end{theorem}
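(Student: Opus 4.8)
The plan is to apply the multiplicative drift theorem (Theorem~\ref{theo:multdrift-uppper}) together with its tail bound to the potential $X^{(t)}=g(a^{(t)})$, exactly as in the proof of Theorem~\ref{theo:general-upper}, but to engineer the two quantities entering the bound --- the initial value $X^{(0)}$ and the multiplicative drift $\delta$ --- tightly enough that no constant-factor slack survives. Concretely, I would construct $g$ so that $\delta\ge 1/(en)$ while $\ln(X^{(0)})\le \ln n+1+O(1/n)$, that is, $X^{(0)}\le en+O(1)$. Both claims then fall out of Theorem~\ref{theo:multdrift-uppper} at once: the expectation is at most $(\ln(X^{(0)})+1)/\delta\le en(\ln n+2)+O(1)$, and the tail $\Prob(T>(\ln(X^{(0)})+t)/\delta)\le e^{-t}$ becomes $\Prob(T>en\ln n+(1+t)en+O(1))\le e^{-t}$, which is precisely the claimed statement.

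The potential I would use is again a geometric function with base $\gamma=1+\Theta(1/n)$, capped by the weight ratios as in the proof of Theorem~\ref{theo:general-upper} (i.e.\ $g_i=\min\{\gamma^{i-1},\,g_{i-1}w_i/w_{i-1}\}$), but with a base that grows far more slowly than the $1+\Theta(\alpha/n)$ used there. The slow growth is exactly what forces $X^{(0)}\le\sum_{i=1}^n\gamma^{i-1}=(\gamma^n-1)/(\gamma-1)=O(n)$ with the right leading constant; fixing the constant in $\gamma-1$ at the top of the admissible range keeps this sum below $en+O(1)$, so that $\ln(X^{(0)})\le\ln n+1+O(1/n)$. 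The general construction cannot deliver this, because its base carries a free parameter $\alpha>1$ that trades the drift factor $1-1/\alpha$ against an additive $\Theta(\alpha)$ term in $\ln(X^{(0)})$, and neither a constant nor a growing $\alpha$ makes both contributions $O(n)$ in the runtime.

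The heart of the argument, and the step I expect to be hardest, is the drift computation, which must yield $\delta\ge 1/(en)$ with a base as small as $1+\Theta(1/n)$. In the proof of Theorem~\ref{theo:general-upper} the decrease $g_i$ from the leftmost flipping one-bit $i$ is credited only on the sub-event that \emph{no other bit moves}, of probability $p(1-p)^{n-1}\approx p/e$, and the loss from zero-bits flipping up is then bounded by a crude geometric sum; with a base of only $1+\Theta(1/n)$ that loss is of the same order as the credited gain and can even make the net drift negative (as on \BV in the state $(1,0,\dots,0)$). The fix I would use is to credit the gain on the \emph{full} event that bit $i$ flips, of probability up to $p$, thereby recovering the factor $\approx e$ lost to $(1-p)^{n-1}$, and to subtract the expected $g$-mass of the zero-bits that simultaneously flip up. Since the expected number of flipping zero-bits is below $1$ and they carry the geometrically smallest weights, this surplus suffices to absorb the loss and leave a net multiplicative drift of at least $1/(en)$ once the base constant is fixed at the correct value. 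The delicate points are that the capping must keep the left-hand contribution non-negative under acceptance (as in Theorem~\ref{theo:general-upper}) while the acceptance condition is invoked only where it helps, that the gain coefficient degrades from $p$ to $p(1-p)^{\ell}$ for a one-bit with $\ell$ one-bits to its left, and that the slow base growth simultaneously controls $X^{(0)}$ and shrinks the cushion $(1-1/n)^{n-1}-1/e=\Theta(1/n)$ --- so these $\Theta(1/n)$ effects must be balanced to land exactly at $\delta\ge 1/(en)$. Establishing this balance uniformly over all search points and all weight profiles is the main obstacle; once it holds, both displayed bounds follow from Theorem~\ref{theo:multdrift-uppper} as in the first paragraph.
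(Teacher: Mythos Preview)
Your high-level plan matches the paper's exactly: apply Theorem~\ref{theo:multdrift-uppper} to a geometric potential with base $\gamma=1+1/(n-1)$, obtain $X^{(0)}\le en$ (hence $\ln X^{(0)}\le\ln n+1$) and $\delta\ge 1/(en)$, and read off both bounds. Where your proposal diverges from the paper is in the two concrete ingredients.

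First, the potential. The paper does \emph{not} reuse the capped construction $g_i=\min\{\gamma^{i-1},\,g_{i-1}w_i/w_{i-1}\}$ from Theorem~\ref{theo:general-upper}; it uses the simpler
\[
g_i \;=\; \gamma^{\,\min\{j\le i \mid w_j=w_i\}-1},
\]
which equals the pure geometric $\gamma^{i-1}$ whenever $w_i>w_{i-1}$ and only collapses indices carrying identical weights. This ``pure geometric except on ties'' behaviour is what lets the critical computation below close as an identity.

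Second, and more importantly, the drift argument. The paper does \emph{not} attempt to credit the full event that a given one-bit flips; it does precisely what you discarded, namely credits only the clean event ``exactly one one-bit flips and no zero-bit flips'' (probability $\ge\lvert I\rvert\cdot p(1-p)^{n-1}\ge\lvert I\rvert/(en)$, conditional gain $\sum_{i\in I}g_i/\lvert I\rvert$), which already delivers the full $s/(en)$. The real work is to show that \emph{every other accepted mutation has non-negative conditional $g$-drift}, via a short case analysis on $(\lvert I^*\rvert,\lvert Z^*\rvert)$: if $\lvert I^*\rvert\ge 2$, the crude estimate gain $\ge 2$ versus loss $\le e-(1-1/n)$ suffices; if $\lvert I^*\rvert=1$ and the leftmost flipping zero-bit sits left of the flipped one-bit, acceptance forces equal $w$-values, hence equal $g$-values, hence zero drift; and if it sits strictly to the right, a direct geometric-series calculation shows the conditional drift is exactly~$0$ in the worst case. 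That last identity is why the base is fixed at $1+1/(n-1)=(1-1/n)^{-1}$ rather than a generic $1+\Theta(1/n)$.

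Your alternative accounting---crediting probability $p$ per one-bit and subtracting expected up-flips---runs into the overlap of the events ``bit $i$ flips'' and into the entanglement of acceptance with the zero-bit pattern; I do not see it closing without effectively reproducing the case split above. The paper's route sidesteps this by paying the factor $(1-p)^{n-1}$ up front and then proving the remainder is a non-negative correction, which is both cleaner and tight.
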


\begin{proof}
Let $f(x)=w_nx_n+\dots+w_1x_1$ be the linear function at hand and let
$g(x)=g_nx_n+\dots+g_1x_1$ be the potential function defined by
\[
g_i = \left(1+\frac{1}{n-1}\right)^{\min\{j\le i \mid w_j=w_i\}-1},
\]
hence $g_i=(1+1/(n-1))^{i-1}$ for all~$i$ if and only if the $w_i$ are mutually distinct. 
We consider the stochastic process $X^{(t)}:=g(a^{(t)})$, where 
$a^{(t)}$ is the current search point of the \ea at time~$t$. Obviously, $X^{(t)}=0$ if and only if $f$ has been optimized.

Let $\Delta_t:= X^{(t)}-X^{(t+1)}$.  In a case analysis (partly
inspired by \citealp{DJWMultiplicativeDrift}), we will show below for
$n\ge 4$ that $E(\Delta_t\mid X^{(t)}= s)\ge s/(en)$. The initial
value satisfies
\begin{align*}
X^{(0)}& \;\le\; g_n+\dots+g_1 \;\le\;  \sum_{i=0}^{n-1} \left(1+\frac{1}{n-1}\right)^i 
\;=\; \frac{(1+1/(n-1))^n -1 }{1/(n-1)} \\
& \;\le\; (n-1) \left(1+\frac{1}{n-1}\right) e- (n-1) \le en,
\end{align*}
where we have used $(1+1/(n-1))^{n-1}\le e$. Hence, $\ln(X_0)\le (\ln
n) + 1$. Assuming $n\ge 4$, Theorem~\ref{theo:multdrift-uppper} yields
$E(T)\le en(\ln(n)+2)$ and $\Prob(T>en((\ln n)+t+1))\le e^{-t}$
regardless of the starting point, from which the theorem follows.

The case analysis fixes an arbitrary current search point
$a^{(t)}$. We denote by $I:=\{i\mid a^{(t)}_i=1\}$ the index set of
its one-bits and by $Z:=\{1,\dots,n\}\setminus I$ its zero-bits.  We
assume $I\neq \emptyset$ since there is nothing to show
otherwise. Denote by $a'$ the random (not necessarily accepted)
offspring produced by the \ea when mutating $a^{(t)}$ and by
$a^{(t+1)}$ the next search point after selection. Recall that
$a^{(t+1)}=a'$ if and only if $f(a')\le f(a^{(t)})$. In what follows,
we will often condition on the event $A$ that $a^{(t+1)}=a'\neq
a^{(t)}$ holds since $\Delta_t=0$ otherwise. Let $I^*:=\{i\in I\mid
a'_i=0\}$ be the set of flipped one-bits and by $Z^*:=\{i\in Z\mid
a'_i=1\}$ be the set of flipped zero-bits.  Note that
$I^*\neq\emptyset$ if $A$ occurs.

\textbf{Case 1:} Event $S_1:=\{\card{I^*}\ge 2\} \cap A$ occurs. Under
this condition, each zero-bit in~$a^{(t)}$ has been flipped to~$1$ in
$a^{(t+1)}$ with probability at most~$1/n$. Since $g_i\ge 1$ for $1\le
i\le n$, we have
\begin{align*}
E(\Delta_t \mid S_1) & \;\ge\; \card{I^*} - \frac{1}{n}\sum_{i\notin I} g_i 
\;\ge\; 2-\frac{1}{n}\sum_{i=1}^n \left(1+\frac{1}{n-1}\right)^{i-1}\\
& \;=\; 2-\frac{(1+1/(n-1))^{n}-1}{n/(n-1)} \;\ge\; 2-\left(e-\left(1-\frac{1}{n}\right)\right)\ge 0
\end{align*}
for $n\ge 4$, where we have used $1+1/(n-1)=1/(1-1/n)$.  Hence, we
pessimistically assume $E(\Delta_t \mid S_1)=0$.

\textbf{Case 2:} Event $S_2:=\{\card{I^*} = 1\} \cap A$ occurs.  Let
$i^*$ be single element of~$I^*$ and note that this is a random
variable.

\textbf{Subcase 2.1:} $S_{21}:=\{\card{I^*} = 1\} \cap \{Z^* =
\emptyset\} \cap A$ occurs.  Since $\{\card{I^*} = 1\}$ and $\{Z^* =
\emptyset\}$ together imply~$A$, the index~$i^*$ of the flipped
one-bit is uniform over~$I$.  Hence, $E(\Delta_t\mid S_{21}) =
\sum_{i\in I} g_i / \card{I}$.  Moreover, $\Prob(S_{21}) \ge
\card{I}(1/n)(1-1/n)^{n-1}\ge \card{I}/(en)$, implying $E(\Delta_t\mid
S_{21})\cdot \Prob(S_{21}) \ge g(a^{(t)})/(en) = X^{(t)}/(en)$. If we
can show that $E(\Delta_t\mid \{\card{I^*}=1\} \cap \{\card{Z^*}\ge
1\}\cap A)\ge 0$, which will be proven in Subcase~2.2 below, then
$E(\Delta_t\mid X^{(t)}=s)\ge s/(en)$ follows by the law of total
probability and the proof is complete.

\textbf{Subcase 2.2:} $S_{22}:=\{\card{I^*} = 1\} \cap \{\card{Z^*}\ge
1\} \cap A$ occurs. Let $j^*:=\max\{j\mid j\in Z^*\}$ be the index of
the leftmost flipping zero-bit, and note that also $j^*$ is
random. Since we work under $\card{I^*}=1$ and the $w_j$ are monotone
increasing \wrt~$j$, it is necessary for $A$ to occur that $w_{j^*}\le
w_{i^*}$ holds.

\textbf{Subcase 2.2.1:} $S_{221}:=\{\card{I^*} = 1\} \cap
\{\card{Z^*}\ge 1\} \cap\{j^*>i^*\} \cap A$ occurs. Then
$w_{j^*}=w_{i^*}$ and $\card{Z^*}=1$ must hold. In this case,
$g_{j^*}=g_{i^*}$ by the definition of~$g$ and $E(\Delta_t\mid
S_{221})=0$ follows immediately.

\textbf{Subcase 2.2.2:} $S_{222}:=\{\card{I^*} = 1\} \cap
\{\card{Z^*}\ge 1\} \cap\{j^*<i^*\} \cap A$ occurs.  If
$w_{j^*}=w_{i^*}$ then $\card{Z^*}=1$ must hold for $A$ to occur, and
zero drift follows as in the previous subcase. Now let us assume
$w_{j^*}<w_{i^*}$ and thus $g_{j^*}<g_{i^*}$.  For notational
convenience, we redefine $i^*:=\min\{i\mid w_i=w_{i^*}\}$.  We
consider $Z_r:=Z\cap \{1,\dots,i^*-1\}$, the set of potentially
flipping zero-bits right of~$i^*$, denote $k:=\card{Z_r}$ and note
that in the worst case, $Z_r=\{i^*-1,\dots,i^*-k\}$ as the $g_i$ are
non-decreasing.  By using $\tilde{p}:=\Prob(Z^*\cap Z_r\neq\emptyset)
= 1-(1-1/n)^{k}$ and the definition of conditional probabilities, we
obtain under $S_{222}$ that every bit from $Z_r$ is flipped (not
necessarily independently) with probability at most $(1/n)/\tilde{p} =
\frac{1}{n(1-(1-1/n)^k)}$.  We now assume that all the
corresponding~$a'$ are accepted. This is pessimistic for the following
reasons: Consider a rejected $a'$. If $\card{Z^*}=1$ then our
prerequisite $j^*<i^*$ and the monotonicity of the~$g_i$ imply a
negative $\Delta_t$-value.  If $\card{Z^*}>1$ then the negative
$\Delta_t$-value is due to the fact $g_i<g_{i-1}+g_{i-2}$ for $3\le
i\le n$.  Hence, using the linearity of expectation we get
\begin{align*}
& E(\Delta_t \mid S_{222} ) \;\ge\; g_{i^*} - \frac{1}{n\tilde{p}}\cdot \sum_{j\in Z_r} g_j 
\;\ge\; g_{i^*} - \sum_{j=1}^k 
\frac{g_{i^*-j}}{n(1-(1-1/n)^k)} \\
& \;=\; \left(1+\frac{1}{n-1}\right)^{i^*-1} - 
\sum_{j=0}^{k-1} 
\frac{(1+1/(n-1))^{i^*-1-j}}{n(1-(1-1/n)^k)} \\ 
& \; = \;
\left(1+\frac{1}{n-1}\right)^{i^*-k} 
\left( 
\left(1+\frac{1}{n-1}\right)^{k-1} - \frac{((1+1/(n-1))^k-1)\cdot (n-1)}{n(1-(1-1/n)^k)}
\right) 
\;=\; 0,
\end{align*}
where the last equality follows since $1+1/(n-1) = (1-1/n)^{-1}$ and 
\[
\frac{((1+1/(n-1))^k-1)\cdot (n-1)}{n(1-(1-1/n)^k)} = \left(1-\frac{1}{n}\right) 
\frac{(1-1/n)^{-k}-1}{1-(1-1/n)^k} = \left(1-\frac{1}{n}\right)^{1-k}.
\]
This completes the proof.
\end{proof}

\section{Lower Bounds}
\label{sec:lower}
In this section, we state lower bounds that prove the results from
Theorem~\ref{theo:general-upper} to be tight up to lower-order terms
for a wide range of mutation probabilities. Moreover, we show that the
lower bounds hold for the very large class of mutation-based
algorithms (Algorithm~\ref{alg:mutation-based}). Recall that a list of
the most important consequences is given above in
Theorem~\ref{thm:main-theorem}.  For technical reasons, we split the
proof of the lower bounds into two main cases, namely
$p=O(n^{-2/3-\epsilon})$ and $p=\Omega(n^{\epsilon-1})$ for any
constant~$\epsilon>0$.  Unless $p>1/2$, the proofs go back to \OneMax
as a worst case, as outlined in the following subsection.

\subsection{\OneMax as Easiest Linear Function}
\label{sec:onemax-easiest}
\citet{DJWLinearRevisited} show with respect to the \oneoneea with
standard mutation probability~$1/n$ that \OneMax is the ``easiest''
function from the class of functions with unique global optimum, which
comprises the class of linear functions. More precisely, the expected
optimization time on \OneMax is proved to be smallest within the
class.

We will generalize this result to $p\le 1/2$ with moderate additional
effort.  In fact, we will relate the behavior of an arbitrary
mutation-based EA on \ONEMAX to the \oneoneeamu in a similar way to
\citet[][Section~7]{SudholtLowerFitnessLevel}.  The latter algorithm,
displayed as Algorithm~\ref{alg:oneoneeamu}, creates search points
uniformly at random from time~$0$ to time~$\mu-1$ and then chooses a
best one from these to be the current search point at time~$\mu-1$;
afterwards it works as the standard \oneoneea.  Note that we obtain
the standard \oneoneea for $\mu=1$. Moreover, we will only consider
the case $\mu=\poly(n)$ in order to bound the running time of the
initialization. This makes sense since a unique optimum (such as the
all-zeros string for \OneMax) is with overwhelming probability not
found even when drawing $2^{\sqrt{n}}$ random search points.

\begin{algorithm}
\caption{\oneoneeamu}
\label{alg:oneoneeamu}
\begin{algorithmic}
\FOR{$t:=0 \to \mu-1$}
 \STATE choose  $x_t \in \{0,1\}^n$ uniformly at random.
 \ENDFOR
 \STATE $x_t:=\argmin\{f(x)\mid x\in\{x_0,\dots,x_t\}\}$ (breaking ties uniformly).
 \REPEAT 
  \STATE create $x'$ by flipping each bit in $x_t$ independently with prob.\ $p$.
  \STATE $x_{t+1}:=x'$ if $f(x') \le f(x_t)$, and $x_{t+1}:=x_t$ otherwise. 
  \STATE $t:=t+1$.
 \UNTIL{forever.}
\end{algorithmic}
\end{algorithm}

Our analyses need the monotonicity statement from
Lemma~\ref{lem:monotonicity} below, which is similar to Lemma~11 in
\citet{DJWLinearRevisited} and whose proof is already sketched in
\citet[][Section 5]{DJWSeal2000}. Note, however, that
\citet{DJWLinearRevisited} only consider $p=1/n$ and have a stronger
statement for this case. More precisely, they show
$\Prob(\ones{\textit{mut}(a)} = j) \ge \Prob(\ones{\textit{mut}(b)} =
j)$, which does not hold for large~$p$.  Here and hereinafter,
$\ones{x}$ denotes the number of ones in a bit string~$x$.

\begin{lemma}
\label{lem:monotonicity}
Let $a,b\in\{0,1\}^n$ be two search points satisfying
$\ones{a}<\ones{b}$.  Denote by $\textit{mut}(x)$ the random string
obtained by mutating each bit of~$x$ independently with
probability~$p$. Let $0\le j\le n$ be arbitrary. If $p\le 1/2$ then
\[
\Prob(\ones{\textit{mut}(a)} \le j) \;\ge\; \Prob(\ones{\textit{mut}(b)} \le j).
\]
\end{lemma}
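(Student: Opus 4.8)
The plan is to establish the stronger coupling fact that $\ones{\textit{mut}(a)}$ is stochastically dominated by $\ones{\textit{mut}(b)}$, which is exactly the claimed inequality on cumulative distribution functions. The first observation is that the distribution of $\ones{\textit{mut}(x)}$ depends only on $\ones{x}$ and not on the positions of the ones, by the symmetry of the independent bit-flip operator. Hence I may replace $a$ and $b$ by any convenient representatives with the same numbers of ones, and by transitivity of the stochastic order it suffices to treat the single-step case $\ones{b}=\ones{a}+1$ and then chain the resulting inequalities from $\ones{a}$ up to $\ones{b}$.

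For this single-step case I would pick representatives that differ in exactly one position: let $b$ agree with $a$ on $n-1$ bits and carry a one where $a$ carries a zero. The key step is a coupling. I would mutate the $n-1$ shared bits using the same randomness for both strings, producing a common random count $C$ of ones among those positions. The two totals then differ only through the special bit: $\ones{\textit{mut}(a)} = C + B_a$ and $\ones{\textit{mut}(b)} = C + B_b$, where $B_a\sim\text{Bernoulli}(p)$ is the indicator that the special zero-bit of $a$ flips to one, and $B_b\sim\text{Bernoulli}(1-p)$ is the indicator that the special one-bit of $b$ survives.

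Now the assumption $p\le 1/2$ enters decisively, since it gives $p\le 1-p$ and hence $\Prob(B_a=1)\le\Prob(B_b=1)$. I would couple the two Bernoullis through a single uniform draw $u\in[0,1]$, setting $B_a:=\mathbf{1}[u\le p]$ and $B_b:=\mathbf{1}[u\le 1-p]$, so that $B_a\le B_b$ holds deterministically. Because $C$ is shared, this yields $\ones{\textit{mut}(a)}\le\ones{\textit{mut}(b)}$ pointwise on the coupled space, and a pointwise inequality between coupled copies immediately gives the inclusion $\{\ones{\textit{mut}(b)}\le j\}\subseteq\{\ones{\textit{mut}(a)}\le j\}$ and therefore the desired $\Prob(\ones{\textit{mut}(a)}\le j)\ge\Prob(\ones{\textit{mut}(b)}\le j)$ for every~$j$.

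The main thing to get right, rather than a genuine obstacle, is the role of $p\le 1/2$: it is precisely the hypothesis making ``a fresh one created from a zero'' no more likely than ``an existing one that survives,'' which is what drives the monotonicity. This also explains why one cannot hope for the stronger pointwise pmf statement of \citet{DJWLinearRevisited} when $p$ is large, and why only the cumulative (stochastic-dominance) form survives throughout the regime $p\le 1/2$.
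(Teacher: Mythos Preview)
Your proof is correct. Both your argument and the paper's reduce to the case $\ones{b}=\ones{a}+1$, exploit the symmetry of the mutation operator to assume the two strings differ in a single position~$s^*$, and couple the mutations of the remaining $n-1$ common bits. The difference lies in how the distinguished bit is handled. The paper conditions on whether bit~$s^*$ flips: under non-flip the partial coupling already gives $a'\le b'$ componentwise, while under a flip the roles of $a$ and $b$ interchange, yielding a symmetry identity between the two conditional differences that, combined with the weight inequality $1-p\ge p$, establishes~\eqref{eq:monotonicity}. You instead couple the special bit itself via a single uniform draw so that $B_a\le B_b$ holds deterministically whenever $p\le 1/2$; this produces $\ones{\textit{mut}(a)}\le\ones{\textit{mut}(b)}$ almost surely on the coupled space and delivers the stochastic ordering in one stroke. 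Your route is slightly more streamlined in that it gives a single global coupling with a pointwise inequality rather than a case split plus a symmetry identity, while the paper's version makes the threshold $p=1/2$ visible explicitly as the sign of the factor $(1-p)-p$ in the final step.
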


\begin{proof}
We prove the result only for $\ones{b}=\ones{a}+1$. The general
statement then follows by induction on $\ones{b}-\ones{a}$.

By the symmetry of the mutation operator,
$\Prob(\ones{\textit{mut}(x)}\le j)$ is the same for all $x$ with
$\ones{x}=\ones{a}$. We therefore assume $b\ge a$ (\ie, $b$ is
component-wise not less than~$a$). In the following, let $s^*$ be the
unique index where $b_{s^*}=1$ and $a_{s^*}=0$. Let $S(x)$ be the
event that bit~$s^*$ flips when $x$ is mutated. Since bits are flipped
independently, it holds $\Prob(S(x))=p$ for any~$x$. We write
$a':=\textit{mut}(a)$ and $b':=\textit{mut}(b)$. Assuming $p\le 1/2$,
the aim is to show $\Prob(\ones{a'}\le j) \ge \Prob(\ones{b'}\le j)$,
which by the law of total probability is equivalent to
\begin{align}
\notag
& \biggl(\Prob(\ones{a'}\le j\mid \overline{S(a)}) -\Prob(\ones{b'}\le j\mid \overline{S(b)})\biggr)\cdot (1-p) 
\\
\label{eq:monotonicity}
\tag{$\ast$}
& \quad + \biggl(\Prob(\ones{a'}\le j\mid S(a))-\Prob(\ones{b'}\le j\mid S(b))\biggr)\cdot p 
 \;\ge\; 0. 
\end{align}
Note that the relation $\Prob(\ones{a'}\le j\mid \overline{S(a)}) \ge
\Prob(\ones{b'}\le j\mid \overline{S(b)})$ follows from a simple
coupling argument as $a'\le b'$ holds if the mutation operator flips
the bits other than~$s^*$ in the same way with respect to $a$
and~$b$. Moreover,
\begin{multline*}
\Prob(\ones{a'}\le j\mid \overline{S(a)}) - \Prob(\ones{b'}\le j\mid \overline{S(b)}) 
\\ \;=\;
\Prob(\ones{b'}\le j\mid S(b)) - \Prob(\ones{a'}\le j\mid S(a)) 
\end{multline*}
since $a$ is obtained from~$b$ by flipping bit~$s^*$ and vice
versa. Hence, \eqref{eq:monotonicity} follows.
\end{proof}

The following theorem is a generalization of Theorem~9 by
\citet{DJWLinearRevisited} to the case~$p\le 1/2$ instead of
$p=1/n$. However, we not only generalize to higher mutation
probabilities, but also also consider the more general class of
mutation-based algorithms. Finally, we prove stochastic ordering,
while \citet{DJWLinearRevisited} inspect only the expected
optimization times. Still, many ideas of the original proof can be
taken over and be combined with the proof of Theorem~5 in
\citet{SudholtLowerFitnessLevel}.

\begin{theorem}
\label{theo:onemax-easiest}
Consider a mutation-based EA $A$ with population size~$\mu$ and
mutation probability $p\le 1/2$ on any function with a unique global
optimum. Then the optimization time of~$A$ is stochastically at least
as large as the optimization time of the \oneoneeamu on \OneMax.
\end{theorem}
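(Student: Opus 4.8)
The plan is to couple a run of the mutation-based EA~$A$ on~$f$ with a run of the \oneoneeamu on \OneMax so that, at every point in time, the \OneMax-run is at least as close to the optimum, measured by the number of one-bits. Since a mutation-based EA is invariant under negating a fixed subset of the bit positions, I would first assume \wlo\ that the unique optimum of~$f$ is the all-zeros string, so that $\ones{x}$ equals the Hamming distance of~$x$ to the optimum for both problems and the optimization time is the first point in time at which a search point with no one-bits is created. Denote by $T_A$ and $T_{\mathrm{OM}}$ the optimization times of~$A$ on~$f$ and of the \oneoneeamu on \OneMax. Let $Y_t$ be the number of one-bits of the single current search point of the \oneoneeamu, and let $Z_t$ be the smallest number of one-bits among all search points $x_0,\dots,x_t$ produced by~$A$ so far. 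Both quantities are non-increasing and reach value~$0$ exactly at the respective optimization time, and the goal is a coupling maintaining the invariant $Y_t\le Z_t$, which yields $T_{\mathrm{OM}}\le T_A$ pointwise and hence the claimed stochastic domination.

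For the base case I would couple the two initialisations to draw the \emph{same} $\mu$ uniform search points. The \oneoneeamu then resets its current point to one with the fewest one-bits, so after initialisation $Y=\min_s\ones{x_s}=Z$. The crucial structural observation for the induction is that on \OneMax the \oneoneeamu is a Markov chain in the number of one-bits alone: by symmetry of the mutation operator and since here $f=\ones{\cdot}$, the distribution of $\ones{\textit{mut}(x)}$ and the resulting selection outcome depend on~$x$ only through $\ones{x}$, the new number of ones being $\min(\ones{x},\ones{\textit{mut}(x)})$. Hence I may run the \OneMax-process purely on the value~$Y_t$, without committing to an actual string, which is what makes the coupling definable over time despite $A$'s selection being an arbitrary, string- and $f$-dependent rule.

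For the inductive step, assume $Y_t\le Z_t$. Whatever parent $x\in\{x_0,\dots,x_t\}$ the algorithm~$A$ selects, it satisfies $\ones{x}\ge Z_t\ge Y_t$. Running~$A$ genuinely produces the offspring $x_{t+1}=\textit{mut}(x)$ with $V:=\ones{x_{t+1}}$, while the \OneMax-process must generate $U:=\ones{\textit{mut}(\,\cdot\,)}$ for a string with $Y_t$ one-bits. Because $Y_t\le\ones{x}$, Lemma~\ref{lem:monotonicity} gives $\Prob(U\le j)\ge\Prob(V\le j)$ for all~$j$ (using $p\le 1/2$); that is, $U$ is stochastically dominated by~$V$. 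I would therefore use the standard monotone coupling realising $U\le V$ almost surely, drawing~$U$ conditionally on the already-sampled~$V$. Then $Y_{t+1}=\min(Y_t,U)\le\min(Z_t,V)=Z_{t+1}$, which re-establishes the invariant. The boundary case $\ones{x}=Y_t$ is handled by the same argument with the two offspring distributions identical, so that $U$ and~$V$ can be coupled to be equal.

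I expect the main obstacle to be making the coupling simultaneously consistent across all steps: $A$ is a genuine stochastic process on strings whose next parent depends on the realised strings and their $f$-values, so one cannot simply couple numbers of ones in isolation. The resolution is exactly the reduction of the \oneoneeamu on \OneMax to a number-of-ones Markov chain, combined with the per-step monotone coupling supplied by Lemma~\ref{lem:monotonicity}; with these in hand the invariant $Y_t\le Z_t$ propagates deterministically and gives $T_A$ stochastically at least as large as $T_{\mathrm{OM}}$. A minor point worth checking is that the $f$-values driving $A$'s selection are irrelevant to the invariant, since only the lower bound $\ones{x}\ge Z_t$ on the number of ones of any selected parent is used.
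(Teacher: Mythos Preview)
Your coupling argument is correct and proves the claimed stochastic domination. The paper, however, takes a different route: instead of building a coupling, it defines minimum tail probabilities $p_{k,\ell}(t)$ (for the \oneoneeamu on \OneMax) and $\tilde p_{k,\ell}(t)$ (for $A$ on $f$) over all histories of length~$\ell$ whose best number of ones equals~$k$ (respectively is at least~$k$), and shows $\tilde p_{k,\ell}(t)\ge p_{k,\ell}(t)$ by induction on~$t$. In the inductive step, Lemma~\ref{lem:monotonicity} is used to show that the offspring distribution under~$A$ \emph{majorizes} that under the \oneoneeamu, and the comparison is completed via Schur-concavity of the linear form $\sum_j a_j\,\tilde p_{j,\ell+1}(t)$ (the $\tilde p_{j,\ell+1}(t)$ being non-decreasing in~$j$). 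Both proofs ultimately rest on Lemma~\ref{lem:monotonicity} and on the reduction of the \OneMax-process to a Markov chain in the number of ones; your per-step monotone coupling is the probabilistic counterpart of the paper's majorization/Schur-concavity step. What your approach buys is directness---you get the pointwise inequality $T_{\mathrm{OM}}\le T_A$ on the coupled space without introducing the auxiliary functions $p_{k,\ell},\tilde p_{k,\ell}$---while the paper's formulation makes the dependence on full histories and the role of the order structure on the $\tilde p_j$ more explicit. One point you should spell out carefully in a full write-up is that sampling $U$ conditionally on the already-realised $V$ still gives $U$ its intended marginal \emph{given the entire past of the $Y$-process}: conditional on the selected parent~$x$ and on $Y_t$, integrating the coupling's conditional $U\mid V$ over the law of $V=\ones{\textit{mut}(x)}$ recovers the target law of~$U$, and towering then removes the conditioning on~$x$ and on $A$'s history.
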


\newcommand{\tonemax}{T_{\OneMax}}
\newcommand{\calx}{\mathcal{X}}
\begin{proof}
Let $f$ denote the function with unique global optimum, which we
\wlo\ assume to be the all-zeros string. For any sequence
$\calx=(x_0,\dots,x_{\ell-1})$ of search points over~$\{0,1\}^n$, let
$q(\calx)$ be the probability that~$\calx$ represents the first~$\ell$
search points $x_0,\dots,x_{\ell-1}$ created by Algorithm~$A$ on~$f$
(its so-called history up to time~$\ell-1$).  For any history $\calx$
with $q(\calx)>0$, let $T_f(\calx)$ denote the random optimization
time of Algorithm~$A$ on~$f$, given that its history up to time~$\ell$
equals~$\calx$.  Let 
\[
\Xi_\ell:=\left\{\calx=(x_0,\dots,x_{\ell-1})\in \bigtimes_{i=1}^\ell
\{0,1\}^{n} \Bigm| q(\calx)>0\right\}
\] 
denote the set of all possible histories of length~$\ell$ with respect
to Algorithm~$A$ on~$f$, and let $\Xi:=\{\bigcup_{\ell=1}^m \Xi_\ell
\mid m\in \N\}$ denote all possible histories of finite
length. Finally, for any $\calx\in\Xi$, let $L(\calx)$ denote the
length of~$\calx$.

Given any $\calx\in \Xi$, let \oneoneeaarg{\calx} be the algorithm
that chooses a search point with minimal number of ones from $\calx$
as current search point at time~$L(\calx)-1$ (breaking ties uniformly)
and afterwards proceeds as the standard \oneoneea on \OneMax.  Now,
let $\tonemax(\calx)$ denote the random optimization time of the
\oneoneeaarg{\calx}.  We claim that the stochastic ordering
\begin{equation*}
\Prob(T_f(\calx)\ge t) \;\ge\; \Prob(\tonemax(\calx)\ge t)
\end{equation*}
holds for every $\calx\in\Xi$ satisfying $L(\calx)\ge \mu$ and every
$t\ge 0$. Note that the random vector of initial search points
$\calx^*:=(x_0,\dots,x_{\mu-1})$ follows the same distribution in both
Algorithm~$A$ and the \oneoneeamu.  In particular, the two algorithms
are identical before time~$\mu-1$, \ie, before initialization is
finished. Furthermore, \oneoneeaarg{\calx^*} is the \oneoneeamu
initialized with $\calx^*$.  Altogether, the claimed stochastic
ordering implies the theorem. Moreover, regardless of the length
$L(\calx)$, the claim is obvious for $t\le L(\calx)$ since the
behavior up to time $L(\calx)$ is fixed.

For any $\calx\in \Xi$, let $\ones{\calx} := \min\{\ones{x}\mid x\in
\calx\}$ denote the best number of ones in the history, where $x\in
(x_0,\dots,x_{\ell-1})$ means that $x=x_i$ for some
$i\in\{0,\dots,\ell-1\}$.  For every $k\in\{0,\dots,n\}$, every
$\ell\ge \mu$ and every $t\ge 0$, let
\[
p_{k,\ell}(t) \;:= \; \min\{\Prob(\tonemax(\calx) \ge \ell+t)\mid \calx\in\Xi_\ell, \ones{\calx} = k\}
\]
be the minimum probability of the \oneoneeaarg{\calx} needing at least
$\ell+t$ steps to optimize \OneMax from a history of length~$\ell$
whose best search point has exactly $k$ one\nobreakdash-bits.  Due to
the symmetry of the \OneMax function and the definition of
\oneoneeaarg{\calx}, we have ${\Prob(\tonemax(\calx)\ge \ell+t)} =
p_{k,\ell}(t)$ for every~$\calx$ satisfying $\card{\calx}=\ell$ and
$\ones{\calx} = k$. In other words, the minimum can be omitted from
the definition of $p_{k,\ell}$.

Furthermore, for every $k\in\{0,\dots,n\}$, every $\ell\ge \mu$ and
every $t\ge 0$, let
\[
\tp_{k,\ell}(t) \;:= \; \min\{\Prob(T_f(\calx) \ge \ell+t)\mid \calx\in\Xi_\ell, \ones{\calx} \ge k\}
\]
be the minimum probability of Algorithm~$A$ needing at least $\ell+t$
steps to optimize~$f$ from a history of length~$\ell\ge\mu$ whose best
search point has \emph{at least} $k$ one-bits. We will show
$\tp_{k,\ell}(t)\ge p_{k,\ell}(t)$ for any $k\in\{0,\dots,n\}$ and
$\ell\ge \mu$ by induction on~$t$. In particular, by choosing
$\ell:=\mu$ and applying the law of total probability with respect to
the outcomes of~$\ones{\calx^*}$, this will imply the above-mentioned
stochastic ordering and, therefore, the theorem.
 
If $k\ge 1$ then $p_{k,\ell}(0)=\tp_{k,\ell}(0)=1$ for any~$\ell\ge
\mu$ since the condition means that the first~$\ell$ search points do
not contain the optimum.  Moreover, $p_{0,\ell}(t)=\tp_{0,\ell}(t)=0$
for any $t\ge 0$ and~$\ell\ge \mu$ since a history beginning with the
all-zeros string corresponds to optimization time~$0$ and thus
minimizes both $\Prob(T_f(\calx) \ge t+\ell)$ and
$\Prob(\tonemax(\calx) \ge t+\ell)$.  Now let us assume that there is
some $t\ge 0$ such that $\tp_{k,\ell}(t') \ge p_{k,\ell}(t')$ holds
for all $0\le t'\le t$, $k\in\{0,\dots,n\}$, and $\ell\ge \mu$. Note
that the inequality has already been proven for all~$t$ if $k=0$.
 
Consider the \oneoneeaarg{\calx} for an arbitrary $\calx$ satisfying
$L(\calx)=\ell\ge \mu$ and $\ones{\calx}=k+1$ for some
$k\in\{0,\dots,n-1\}$.  Let some $x\in\{0,1\}^n$, where
$\ones{x}=k+1$, be chosen from~$\calx$ and let $y\in\{0,1\}^n$ be the
random search point generated by flipping each bit in~$x$
independently with probability~$p$. The \oneoneeaarg{\calx} will
accept~$y$ as new search point at time~$\ell+1 > \mu$ if and only if
$\ones{y}\le \ones{x}=k+1$. Hence,
\begin{equation}
\label{eq:dominance-onemax}
\tag{$\ast$}
p_{k+1,\ell}(t+1) \;=\; \Prob(\ones{y}\ge k+1)\cdot p_{k+1,\ell+1}(t) + \sum_{j=0}^k \Prob(\ones{y} = j)\cdot p_{j,\ell+1}(t).
\end{equation}

Next, let $\calx$, where again $L(\calx)=\ell\ge \mu$, be a history
satisfying $\Prob(T_f(\calx)\ge t+1) = \tp_{k+1,\ell}(t+1)$ and let
$\tilde{x}$ be the (random) search point that is chosen for mutation
at time~$\ell$ in order to obtain the equality of the two
probabilities. Note that $\ones{\tx}\ge k+1$. Moreover, let
$\ty\in\{0,1\}^n$ be the random search point generated by flipping
each bit in~$\tx$ independently with probability~$p$. Let $\calx'$ be
the concatenation of~$\calx$ and~$\ty$. Then
 \begin{multline*}
 \tp_{k+1,\ell}(t+1) \;=\; 
 \Prob(\ones{\ty}\ge k+1)\cdot \Prob(T_f(\calx') \ge t\mid \ones{\ty}\ge k+1) \\ + \sum_{j=0}^k \Prob(\ones{\ty} =j) 
 \cdot \Prob(T_f(\calx') \ge t\mid \ones{\ty} = j),
 \end{multline*}
which, by definition of the $\tp_{i}(t)$, gives us the lower bound
 \[
 \tp_{k+1,\ell}(t+1) \;\ge\; \Prob(\ones{\ty}\ge k+1)\cdot \tp_{k+1,\ell+1}(t)   + \sum_{j=0}^k \Prob(\ones{\ty}=j) 
 \cdot \tp_{j,\ell+1}(t).
 \]
 
To relate the last inequality to \eqref{eq:dominance-onemax} above, we
interpret the right-hand side as a function of $k+2$ variables. More
precisely, let $\phi(a_0,\dots,a_{k+1}):=\sum_{j=0}^{k+1} a_j
\tp_{j,\ell+1}(t)$ and consider the vectors
\[
 v^{(f)}=(v^{(f)}_0,\dots,v^{(f)}_{k+1}) := (\Prob(\ones{\ty}=0),
 \dots, \Prob(\ones{\ty}=k), \Prob(\ones{\ty}\ge k+1))
\]
and 
\[
 v^{(O)} = (v^{(O)}_0,\dots,v^{(O)}_{k+1}) :=(\Prob(\ones{y}=0),
 \dots, \Prob(\ones{y}=k), \Prob(\ones{y}\ge k+1)).
\]
If we can show that $\phi(v^{(f)})\ge \phi(v^{(O)})$, then we can
conclude
\begin{multline*}
\tp_{k+1,\ell}(t+1) \;\ge\; \phi(v^{(f)}) \;\ge\; \phi(v^{(O)}) \\
\;\ge\;
\Prob(\ones{y}\ge k+1)\cdot p_{k+1,\ell+1}(t) + \sum_{j=0}^k \Prob(\ones{y} = j)\cdot p_{j,\ell+1}(t) 
\;=\; p_{k+1,\ell}(t+1),
\end{multline*}
where the last inequality follows from the induction hypothesis and
the equality is from \eqref{eq:dominance-onemax}. This will complete
the induction step.
 
To show the outstanding inequality, we use that for $0\le j\le k$
\[
	\Prob(\ones{y}\le j) 
	\;\ge\; 
 	\Prob(\ones{\ty}\le j),
\]
which follows from Lemma~\ref{lem:monotonicity} since $\ones{\tx}\ge
\ones{x}$ and $p\le 1/2$. In other words,
\[
\sum_{i=0}^j v^{(O)}_i \;\ge\; \sum_{i=0}^j v^{(f)}_i
\]
for $0\le j\le k$ and $\sum_{i=0}^{k+1} v^{(O)}_i = \sum_{i=0}^{k+1}
v^{(f)}_i$ since we are dealing with probability
distributions. Altogether, the vector $v^{(O)}$ majorizes the vector
$v^{(f)}$.  Since they are based on increasingly restrictive
conditions, the $\tp_j(t)$ are non-decreasing in~$j$. Hence, $\phi$ is
Schur-concave (cf.\ Theorem~A.3 in Chapter~3 of
\citealp{MarshallInequalities}), which proves $\phi(v^{(f)})\ge
\phi(v^{(O)})$ as desired.
\end{proof}

\subsection{Large Mutation Probabilities}
It is not too difficult to show that mutation probabilities
$p=\Omega(n^{\epsilon-1})$, where $\epsilon>0$ is an arbitrary
constant, make the \oneoneea (and also the \oneoneeamu) flip too many
bits for it to optimize linear functions efficiently.

\begin{theorem}
\label{theo:lower-high-p}
On any linear function, the optimization time of an arbitrary
mutation-based EA with $\mu=\poly(n)$ and $p=\Omega(n^{\epsilon-1})$
for some constant $\epsilon>0$, is bounded from below by
$2^{\Omega(n^{\epsilon})}$ with probability
$1-2^{-\Omega(n^{\epsilon})}$.
\end{theorem}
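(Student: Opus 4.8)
The plan is to argue directly about the event of hitting the unique optimum $0^n$, bypassing any drift machinery. The single observation driving everything is that whenever the algorithm produces $0^n$ as a mutation offspring, the responsible mutation must flip \emph{every} one-bit of the selected parent and flip \emph{no} zero-bit. So I would first fix an arbitrary non-optimal parent $x$ with $\ones{x}=m\ge 1$ and note that the probability of mutating it exactly into $0^n$ equals $p^m(1-p)^{n-m}=(1-p)^n\bigl(p/(1-p)\bigr)^m$. Since the mutation-based EA uses $p\le 1/2$, the factor $\bigl(p/(1-p)\bigr)^m$ is non-increasing in $m$, so this probability is maximized over $m\ge 1$ at $m=1$ and is therefore at most $p(1-p)^{n-1}\le e^{-p(n-1)}$. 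Plugging in $p=\Omega(n^{\epsilon-1})$ gives $p(n-1)=\Omega(n^{\epsilon})$, so the per-step probability of creating $0^n$ from any non-optimal parent is at most $2^{-\Omega(n^{\epsilon})}$.

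Next I would assemble this into a statement about the whole run. Because $0^n$ is the unique optimum, before the optimization time no member of the population equals $0^n$; hence every selected parent is non-optimal and the bound above applies conditionally on the entire history so far, irrespective of the (arbitrary) selection rule. There remain only two ways to reach $0^n$: during initialization, which by a union bound over the $\mu=\poly(n)$ independent random strings has probability at most $\mu 2^{-n}=2^{-\Omega(n)}$; or as one of the first $T$ mutation offspring, which by a union bound over the steps has probability at most $T\cdot 2^{-\Omega(n^{\epsilon})}$. Choosing $T=2^{cn^{\epsilon}}$ with $c$ a small enough constant keeps the latter at $2^{-\Omega(n^{\epsilon})}$, and since $n\ge n^{\epsilon}$ the initialization term $2^{-\Omega(n)}$ is negligible by comparison. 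Hence the optimum is not found within $\mu+T=2^{\Omega(n^{\epsilon})}$ steps except with probability $2^{-\Omega(n^{\epsilon})}$, which is exactly the claim.

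I do not expect a serious obstacle: the argument is a union bound over an exponentially small per-step hitting probability. The only points needing care are (i) justifying the per-step bound conditionally on the history, which holds because no parent is optimal before the optimization time and so the bound is valid regardless of how the parent is drawn from the population; (ii) confirming that $p^m(1-p)^{n-m}$ is indeed maximized at $m=1$ over $m\ge 1$, for which the hypothesis $p\le 1/2$ is precisely what is required; and (iii) the bookkeeping of constants, so that the union-bound factor $T$ and the per-step exponent combine to leave a net $2^{-\Omega(n^{\epsilon})}$. Note that linearity of the objective is never used beyond uniqueness of the optimum $0^n$, so the same reasoning would in fact cover any function with a unique optimum.
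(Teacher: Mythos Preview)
Your argument is correct. The key step---bounding the per-step probability of mutating any non-optimal parent exactly into $0^n$ by $p(1-p)^{n-1}\le e^{-p(n-1)}=2^{-\Omega(n^{\epsilon})}$---is sound, and the conditioning on the history works because the selection rule, however complicated, can only pick a non-optimal parent before the optimum has been created. The implicit use of $\epsilon\le 1$ (when you write $n\ge n^{\epsilon}$) is justified since $p\le 1/2$ forces $n^{\epsilon-1}=O(1)$.

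Your route differs from the paper's. The paper first invokes Theorem~\ref{theo:onemax-easiest} to reduce the arbitrary mutation-based EA on an arbitrary linear function to the \oneoneeamu on \OneMax, then exploits the monotonicity of the \OneMax trajectory: after initialization the current search point has at least $n/3$ zero-bits with probability $1-2^{-\Omega(n)}$, this remains true for all later accepted points, and a Chernoff bound shows that the probability of leaving all of these $\ge n/3$ zero-bits untouched in a single mutation is $2^{-\Omega(pn)}=2^{-\Omega(n^{\epsilon})}$. You bypass both the stochastic-domination reduction and the Chernoff step by computing the exact hitting probability $p^m(1-p)^{n-m}$ and maximizing it over $m\ge 1$. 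This is more elementary and self-contained, and, as you observe, it immediately extends to any function with a unique optimum without appealing to Theorem~\ref{theo:onemax-easiest}. The paper's detour through \OneMax is not needed for this particular lower bound; its value lies in the other lower bounds (Theorem~\ref{theo:general-lower}) where the precise drift on \OneMax is exploited.
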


\begin{proof}
Due to Theorem~\ref{theo:onemax-easiest}, if suffices to show the
result for the \oneoneeamu on \OneMax. The following two statements
follow from Chernoff bounds (and a union bound over $\mu=\poly(n)$
search points in the second statement).
\begin{enumerate}
\item 
Due to the lower bound on~$p$, the probability of a single step not
flipping at least $\lfloor pi/2\rfloor$ bits out of a set of~$i$ bits
is at most $2^{-\Omega(pi)}=2^{-\Omega(in^{\epsilon-1})}$.
\item The search point~$x_{\mu-1}$ has at least $n/3$ and at most
  $2n/3$ one-bits with probability $1-2^{-\Omega(n)}$.
\end{enumerate}
Furthermore, as we consider \OneMax, the number of one-bits is
non-increasing over time. We assume an $x_{\mu-1}$ being non-optimal
and having at most $2n/3$ one-bits, which contributes a term of only
$2^{-\Omega(n)}$ to the failure probability. The assumption means that
all future search points accepted by the \oneoneeamu will have at
least $n/3$ zero-bits. In order to reach the optimum, none of these is
allowed to flip. As argued above, the probability of this happening is
$2^{-\Omega(n^{\epsilon})}$, and by the union bound, the total
probability is still $2^{-\Omega(n^{\epsilon})}$ in a number of
$2^{cn^{\epsilon}}$ steps if the constant $c$ is chosen small enough.
\end{proof}

Mutation-based EAs have only been defined for $p\le 1/2$ since
flipping bits with higher probability seems to contradict the idea of
a mutation. However, for the sake of completeness, we also analyze the
\oneoneea with $p>1/2$ and obtain exponential expected optimization
times. Note that we do not know whether \OneMax is the easiest linear
function in this case.

\begin{theorem}
\label{theo:lower-p-larger-half}
\begin{sloppypar}
On any linear function, the expected optimization time of the
\oneoneea with mutation probability $p>1/2$ is bounded from below by
$2^{\Omega(n)}$.
\end{sloppypar}
\end{theorem}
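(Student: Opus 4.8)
The plan is to exploit that, once $p>1/2$, each bit is more likely to flip than to stay, so from a search point with few one-bits it is extremely unlikely to produce the all-zeros optimum; I will convert this into a union bound over an exponentially long run. The starting point is the observation that, since all weights are positive, $0^n$ is the \emph{unique} global minimum; hence the optimum is first reached at some step if and only if it is generated as the offspring $x'$ at that step, or (with probability $2^{-n}$) already drawn as the initial point $x_0$.

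The heart of the argument is a uniform per-step bound. From a current search point with $k$ one-bits, the probability of mutating it exactly into $0^n$ is $p^k(1-p)^{n-k}$, because all $k$ one-bits must flip and all $n-k$ zero-bits must stay. Writing this probability as $\exp(k\ln p+(n-k)\ln(1-p))$ and using $p>1/2>1-p$, so that $\ln p>\ln(1-p)$, the exponent is increasing in $k$; it is therefore maximized at $k=n$, which gives $\Prob(x'=0^n)\le p^n$ in every step, regardless of the history and of the particular linear function at hand. Note this uses nothing about the weights beyond positivity, so it applies simultaneously to every linear function, including the ``easiest'' case \OneMax.

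From here I would simply union-bound. Writing $T$ for the optimization time, the above yields, for every $\tau\ge 0$,
\[
\Prob(T\le \tau)\;\le\; 2^{-n}+\tau\, p^n .
\]
Choosing $\tau:=\tfrac12 p^{-n}$ makes the right-hand side at most $\tfrac12+2^{-n}$, so with probability at least $\tfrac12-2^{-n}$ the run needs more than $\tfrac12 p^{-n}$ steps, and multiplying gives $E(T)\ge(\tfrac12-2^{-n})\cdot\tfrac12 p^{-n}=\Omega(p^{-n})$. Since $p<1$ we have $p^{-n}=e^{n\ln(1/p)}=2^{\Omega(n)}$, which is the claim.

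The step I expect to be the main obstacle is not the calculation but the control of the regime $p\to 1$: the hidden constant in $2^{\Omega(n)}$ is exactly $\ln(1/p)$, which degrades to $0$ as $p$ approaches $1$, so the clean bound above is genuinely exponential only for $p$ bounded away from $1$ (in particular for any constant $p\in(1/2,1)$, where it is immediate). One might hope to sharpen the per-step bound $p^n$ by arguing that the accepted search points rarely carry close to $n$ one-bits; however, for strongly skewed weight vectors such as \BV\ the accepted points can carry up to $n-1$ one-bits, so no uniform improvement of the $p^n$ estimate is available. I would therefore present the result for $p$ bounded away from $1$ (equivalently $\ln(1/p)=\Omega(1)$) and flag the $p\to 1$ regime as requiring a separate, more delicate treatment.
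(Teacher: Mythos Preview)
Your argument is clean and correct for $p$ bounded away from~$1$: the per-step bound $\Prob(x'=0^n)\le p^n$ combined with the union bound gives $E(T)=\Omega(p^{-n})=2^{\Omega(n)}$ whenever $\log_2(1/p)=\Omega(1)$. You also correctly flag that this collapses as $p\to 1$; for instance, at $p=1-1/n$ your bound yields only $E(T)=\Omega(1)$. Since the theorem as stated covers \emph{all} $p\in(1/2,1)$, including $p=1-o(1)$, your proof is genuinely incomplete, not merely less tidy in a corner case.

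The paper closes this gap with a different idea for large $p$ (concretely $p\ge 3/4$). Instead of maximising $\Prob(x'=0^n)$ over all possible current points, it conditions on a specific initialisation: the top $\lfloor n/2\rfloor$ bits of $x_0$ are all zero and $x_0$ is not optimal (probability at least $2^{-n/2-1}$). Because $w_n\ge\cdots\ge w_1>0$, any search point with more than $n/2$ ones has $f$-value at least $\sum_{i=1}^{\lfloor n/2\rfloor+1} w_i>\sum_{i=1}^{\lfloor n/2\rfloor} w_i\ge f(x_0)$ and is therefore never accepted; hence every accepted point retains at least $n/2$ zeros. To create $0^n$ all of these must stay fixed, so the per-step success probability is at most $(1-p)^{n/2}\le(1/4)^{n/2}$, giving conditional expected time at least $4^{n/2}$ and unconditional expected time at least $2^{-n/2-1}\cdot 4^{n/2}=2^{n/2-1}$. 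This bound is uniform in $p\ge 3/4$ and in fact improves as $p\to 1$. Your worry that \BV\ can carry up to $n-1$ ones is well founded in general, but the conditioning sidesteps it: from this particular start the one-count can never exceed $n/2$, for every linear function.

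For the intermediate range $1/2<p\le 3/4$, the paper uses a Chernoff argument to show the number of ones stays in $[n/8,7n/8]$ with per-step escape probability $2^{-\Omega(n)}$. Your direct bound $p^n\le (3/4)^n=2^{-\Omega(n)}$ already suffices there and is arguably simpler, so your approach can replace the paper's Case~2; but you still need a separate argument along the lines of the paper's Case~1 to handle $p$ close to~$1$.
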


\begin{proof}
We distinguish between two cases.

\textit{Case 1: $p\ge 3/4$.} Here we assume that the initial search
point has at least $n/2$ leading zeros and is not optimal, the
probability of which is at least $2^{-n/2-1}$. Since the $n/2$ most
significant bits are set correctly in this search point, all accepted
search points must have at least $n/2$ zeros as well. To create the
optimum, it is necessary that none of these flips. This occurs only
with probability at most~$(1/4)^{n/2}$, hence the expected
optimization time under the assumed initialization is at least
$4^{n/2}$. Altogether, the unconditional expected optimization time is
at least $2^{-n/2-1}\cdot 4^{n/2}=2^{\Omega(n)}$.

\textit{Case 2: $1/2<p\le 3/4$.} Now the aim is to show that all
created search points have a number of ones that is in the interval
$I:=[n/8,7n/8]$ with probability $1-2^{-\Omega(n)}$. This will imply
the theorem by the usual waiting time argument.

Let $x$ be a search point such that $\ones{x}\in I$. We consider the
event of mutating $x$ to some~$x'$ where $\ones{x'}<n/8$. Since
$p>1/2$, this is most likely if $\ones{x}=7n/8$ (using the ideas
behind Lemma~\ref{lem:monotonicity} for the complement of~$x$). Still,
using Chernoff bounds and $p\le 3/4$, at least $(1/5)\cdot (7n/8) >
n/8$ one-bits are not flipped with probability $1-2^{-\Omega(n)}$.  By
a symmetrical argument, the probability is $2^{-\Omega(n)}$ that
$\ones{x'}>7n/8$.
\end{proof}

As was to be expected, no polynomial expected optimization times were
possible for the range of~$p$ considered in this subsection.

\subsection{Small Mutation Probabilities}
We now turn to mutation probabilities that are bounded from above by
roughly $1/n^{2/3}$. Here relatively precise lower bounds can be
obtained.

\begin{theorem}
\label{theo:general-lower}
\begin{sloppypar}
On any linear function, the expected optimization time of an arbitrary
mutation-based EA with $\mu=\poly(n)$ and $p=O(n^{-2/3-\epsilon})$ is
bounded from below by \[(1-o(1))(1-p)^{-n} (1/p) \min\{\ln
n,\ln(1/(p^3n^2))\}.\]
\end{sloppypar}
\end{theorem}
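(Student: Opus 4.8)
By Theorem~\ref{theo:onemax-easiest}, since $p\le 1/2$ (for $p=O(n^{-2/3-\epsilon})$ this is satisfied for large~$n$), it suffices to prove the lower bound for the \oneoneeamu on \OneMax, because that algorithm is stochastically fastest among all mutation-based EAs on any function with a unique optimum. So first I would reduce to \OneMax and let $X^{(t)}:=\ones{x_t}$ denote the number of one-bits of the current search point. Since worse search points are rejected, $X^{(t)}$ is non-increasing, which is exactly the monotonicity hypothesis demanded by the lower-bound drift theorem (Theorem~\ref{theo:multdrift-lower}). The strategy is then to verify the two conditions of that theorem with $\delta$ and $\beta$ chosen appropriately and read off the bound.

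Let me estimate the drift. When the current string has $s$ one-bits, a step is only accepted if $\ones{x'}\le s$. The \emph{expected} decrease $\expect{X^{(t)}-X^{(t+1)}\mid X^{(t)}=s}$ is dominated by single-bit improvements: to first order the drift is at most $s\cdot p(1-p)^{n-1}$ (the probability a given one-bit flips and nothing else changes), plus lower-order corrections from multi-bit accepted moves. I would argue that the true drift is $(1+o(1))\,s\,p(1-p)^{n-1}$, so that condition~1 holds with $\delta=(1+o(1))p(1-p)^{n-1}$. Plugging this $\delta$ together with $\ln(\smin)$ for a suitable target $\smin$ into the conclusion $\bigl(\ln(s_0)-\ln(\smin)\bigr)/\delta\cdot\frac{1-\beta}{1+\beta}$ gives the prefactor $(1-o(1))(1-p)^{-n}(1/p)$, while the choice of $\smin$ produces the $\min\{\ln n,\ln(1/(p^3n^2))\}$ term: starting from $s_0=\Theta(n)$ one-bits (whp the initial/best-of-$\mu$ point, after the $\poly(n)$ initialization which is negligible), the logarithmic factor is $\ln n$ unless a smaller positive target $\smin$ must be used to keep the jump condition valid, forcing $\ln(s_0/\smin)$ and hence the $\ln(1/(p^3n^2))$ alternative.

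The delicate part is verifying condition~2, the bound on large jumps: $\prob{X^{(t)}-X^{(t+1)}\ge \beta s\mid X^{(t)}=s}\le \beta\delta/\!\ln s$. An accepted step that removes $\beta s$ one-bits must flip at least $\beta s$ of the current one-bits to zero (it can also flip zeros to ones, but acceptance caps the net). The probability of flipping $\geq\beta s$ specified bits is at most $\binom{s}{\beta s}p^{\beta s}$, which I would bound by $(ep/\beta)^{\beta s}$ and show is far below the target $\beta\delta/\ln s\approx \beta p/\ln n$ whenever $\beta s$ is not too small. This is precisely where the hypothesis $p=O(n^{-2/3-\epsilon})$ enters: it guarantees that $p^{\beta s}$ decays fast enough relative to $p/\ln n$ over the whole range $s>\smin$. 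I would choose $\beta=\Theta(1/\log n)$ (or another slowly vanishing quantity) so that $\frac{1-\beta}{1+\beta}=1-o(1)$ while the jump probability stays controlled; the regime where $s$ is close to $\smin$ is the tightest, and balancing the combinatorial tail against $\beta\delta/\ln s$ there is what pins down the admissible $\smin$ and thereby the $\ln(1/(p^3n^2))$ term.

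\textbf{Main obstacle.}
The crux is condition~2 for small~$s$: as $s$ shrinks toward $\smin$, the factor $\ln s$ in the denominator shrinks and the required jump probability $\beta\delta/\ln s$ gets harder to beat, yet one still needs $\beta s\ge 1$ for the jump event to be meaningful. Carefully choosing $\smin=\Theta(\max\{1/(\beta),\, p n^2 \cdot(\text{corrections})\})$ so that both the jump bound and the requirement $\smin\ge 1$ hold simultaneously — and checking that this $\smin$ yields exactly $\min\{\ln n,\ln(1/(p^3n^2))\}$ after substitution — is the technical heart of the argument.
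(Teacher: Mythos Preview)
Your reduction to \OneMax via Theorem~\ref{theo:onemax-easiest} and the intent to apply Theorem~\ref{theo:multdrift-lower} with $\beta$ vanishing like $1/\!\ln n$ match the paper. The gap is in your drift claim. The estimate $E(X^{(t)}-X^{(t+1)}\mid X^{(t)}=s)=(1+o(1))\,s\,p(1-p)^{n-1}$ is \emph{not} valid for $s=\Theta(n)$: when $s$ and $n-s$ are both of order~$n$ and $p=\Theta(1/n)$, multi-bit accepted moves contribute a constant fraction of the drift. For example, at $s=n/2$, $p=1/n$, the event ``flip exactly two one-bits and no zero-bit'' alone has probability $\approx \tfrac18 e^{-1}$ and contributes $2$ to the decrease, adding roughly $50\%$ on top of the single-flip contribution $\approx\tfrac12 e^{-1}$. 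So with $s_0=\Theta(n)$ you are forced to take $\delta$ a constant factor larger than $p(1-p)^n$, and the lower bound read off from Theorem~\ref{theo:multdrift-lower} loses the $(1-o(1))$ prefactor the statement asserts.

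The paper resolves this by restricting to a window $(\smin,\cmax]$ with $\cmax:=1/(2\tp^2 n\ln n)$ and $\smin:=n\tp\ln^2 n$, where $\tp:=\max\{p,1/n\}$. For $s\le\cmax$, Lemma~\ref{lem:exp-progress-upper} gives drift at most $sp(1-p+sp^2/(1-p))^{n-s}=(1+o(1))sp(1-p)^n$, so $\delta=(1+o(1))p(1-p)^n$ becomes legitimate. The term $\ln(1/(p^3n^2))$ is, up to $O(\ln\ln n)$, exactly $\ln(\cmax/\smin)$; it therefore arises from the \emph{combination} of the upper cap $\cmax$ (forced by the drift bound) and the lower cap $\smin$ (forced by the jump condition), not from the jump condition alone as you suggest, and the $\min$ with $\ln n$ simply reflects that $\tp=1/n$ when $p<1/n$. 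Incidentally, the paper handles condition~2 differently from your direct tail estimate: it globally conditions on no step flipping more than $b:=\tp n\ln n$ bits (probability $1-o(1)$ over the relevant time horizon), after which $\beta s>\beta\smin=b$ for every $s>\smin$ makes the jump event impossible.
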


As a consequence from Theorem~\ref{theo:general-lower}, we obtain that
the bound from Theorem~\ref{theo:general-upper} is tight (up to
lower-order terms) for the \oneoneea as long as $\ln(1/(p^3n^2))=\ln n
- o(\ln n)$. This condition is weaker than $p=O((\ln n)/n)$. If
$p=\omega((\ln n)/n)$ or $p=o(1/\poly(n))$, then
Theorem~\ref{theo:general-lower} in conjunction with
Theorem~\ref{theo:lower-high-p} and \ref{theo:lower-p-larger-half}
imply superpolynomial expected optimization time. Thus, the bounds are
tight for all~$p$ that allow polynomial optimization times.

Before the proof, we state another important consequence, implying the
statement from Theorem~\ref{thm:main-theorem} that using the \oneoneea
with mutation probability $1/n$ is optimal for any linear function.

\begin{corollary}
\label{cor:constant-mut-lower}
\begin{sloppypar}
On any linear function, the expected optimization time of a
mutation-based EA with $\mu=\poly(n)$ and $p=c/n$, where $c>0$ is a
constant, is bounded from below by $(1-o(1)) ((e^c/c) n\ln n)$.  If
$p=\omega(1/n)$ or $p=o(1/n)$, the expected optimization time is
$\omega(n\ln n)$.
\end{sloppypar}
\end{corollary}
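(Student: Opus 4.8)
The plan is to derive the corollary by substituting the relevant mutation probabilities into the general lower bound of Theorem~\ref{theo:general-lower} and tracking the asymptotics of its three factors $(1-p)^{-n}$, $1/p$, and $\min\{\ln n,\ln(1/(p^3n^2))\}$. First I would verify that the regime condition $p=O(n^{-2/3-\epsilon})$ of Theorem~\ref{theo:general-lower} is met: for $p=c/n$ it holds with, say, $\epsilon=1/3$, and for $p=o(1/n)$ it holds trivially, so the theorem applies directly in these cases.

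For $p=c/n$ I would evaluate each factor separately. Since $(1-c/n)^{-n}=(1+o(1))e^c$ and $1/p=n/c$, only the minimum needs care. With $p=c/n$ one gets $1/(p^3n^2)=n/c^3$, so $\ln(1/(p^3n^2))=\ln n-3\ln c=(1-o(1))\ln n$, whence $\min\{\ln n,\ln(1/(p^3n^2))\}=(1-o(1))\ln n$. Multiplying the three factors yields the claimed bound $(1-o(1))(e^c/c)n\ln n$. For $p=o(1/n)$ I would note that $(1-p)^{-n}=1+o(1)$ because $pn=o(1)$, while $1/p=\omega(n)$; a short crossover computation shows $1/(p^3n^2)\ge n$ precisely when $p\le 1/n$, so here the minimum equals $\ln n$, and the product becomes $(1-o(1))\cdot\omega(n)\cdot\ln n=\omega(n\ln n)$.

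The case $p=\omega(1/n)$ is the main obstacle, both because $(1-p)^{-n}$ now grows and because such $p$ need not satisfy $p=O(n^{-2/3-\epsilon})$. Writing $q:=pn\to\infty$ and using $(1-p)^{-n}\ge e^{pn}$, I would estimate the product of the first two factors as $(1-p)^{-n}/p\ge e^q\cdot n/q=n\cdot(e^q/q)=\omega(n)$, since $e^q/q\to\infty$. When $p=O(n^{-2/3-\epsilon})$ still holds, this bound forces $\ln(1/(p^3n^2))=-3\ln p-2\ln n\ge 3\epsilon\ln n=\Omega(\ln n)$, so the full product is $\omega(n)\cdot\Omega(\ln n)=\omega(n\ln n)$. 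For the remaining $p=\omega(1/n)$ that are too large for Theorem~\ref{theo:general-lower}, I would instead invoke Theorem~\ref{theo:lower-high-p}, which already yields an exponential, hence $\omega(n\ln n)$, lower bound.

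The delicate points are confirming the $(1-o(1))$ and $\Omega(\ln n)$ estimates for the minimum term across the whole range, keeping careful track of the $(1-p)^{-n}$ asymptotics, and checking that Theorems~\ref{theo:general-lower} and~\ref{theo:lower-high-p} together cover every $p=\omega(1/n)$ of interest: a $p$ that is not $O(n^{-2/3-\epsilon})$ for some $\epsilon>0$ satisfies $p=\Omega(n^{\delta-1})$ for some constant $\delta>0$ and is thus handled by the large-probability bound.
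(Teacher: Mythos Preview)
Your proposal is correct and follows essentially the same route as the paper: plug the respective~$p$ into Theorem~\ref{theo:general-lower}, estimate each of the three factors, and for $p=\omega(1/n)$ outside the range of Theorem~\ref{theo:general-lower} invoke Theorem~\ref{theo:lower-high-p}. Your treatment is in fact more explicit than the paper's, which only sketches the second statement by noting $(1-p)^{-n}/p\ge e^{np}/p=\omega(n)$ and splitting according to whether $p=O((\ln n)/n)$ (so Theorem~\ref{theo:general-lower} applies) or Theorem~\ref{theo:lower-high-p} does; your finer case split at $p=O(n^{-2/3-\epsilon})$ and your verification of the $\min$ term are sound and make the same informal dichotomy the paper relies on.
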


\begin{proof}
The first statement follows immediately from
Theorem~\ref{theo:general-lower} using $(1-c/n)^{-n} \ge e^c$ and
$\ln(1/(p^3 n^2))=\ln n -O(\ln c)$.  The second one follows, depending
on~$p$, either from Theorem~\ref{theo:lower-high-p} or, in that case
assuming $p=O((\ln n)/n)$, from Theorem~\ref{theo:general-lower},
noting that $(1-p)^{-n}(1/p) \ge e^{np}/p = \omega(n)$ if
$p=\omega(1/n)$ or $p=o(1/n)$.
\end{proof}

Recall that by Theorem~\ref{theo:onemax-easiest}, it is enough to
prove Theorem~\ref{theo:general-lower} for the \oneoneeamu on
\OneMax. As mentioned above, this is a well-studied function, for
which strong upper and lower bounds are known in the case $p=1/n$. Our
result for general~$p$ is inspired by the proof of Theorem~1 in
\citet{DFW10}, which uses an implicit multiplicative drift theorem for
lower bounds. Therefore, we now need an upper bound on the
multiplicative drift, which is given by the following generalization
of Lemma~6 in \cite{DFW11}.

\begin{lemma}
\label{lem:exp-progress-upper}
Consider the \oneoneea with mutation probability~$p$ for the
minimization of $\OneMax$. Given a current search point with $i$
one-bits, let $I'$ denote the random number of one-bits in the
subsequent search point (after selection). Then we have $E[i-I']\le
ip(1-p+ip^2/(1-p))^{n-i}$.
\end{lemma}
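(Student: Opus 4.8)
The plan is to analyze a single step of the (1+1)~EA on \OneMax starting from a search point with $i$ one-bits, conditioning on how many one-bits and zero-bits flip, and then taking expectations only over \emph{accepted} steps. The quantity $E[i-I']$ is the expected decrease in the number of one-bits after selection. Since worse offspring are rejected, only steps that flip at least as many one-bits as zero-bits contribute positively, and the key is to bound the probability-weighted progress without over-counting.

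First I would write $i-I' = \ones{a^{(t)}} - \ones{a^{(t+1)}}$, which equals the number of flipped one-bits minus the number of flipped zero-bits, but only when the offspring is accepted (otherwise $i-I'=0$). Let $J$ denote the number of one-bits that flip (a random variable distributed as $\mathrm{Bin}(i,p)$) and $K$ the number of zero-bits that flip (distributed as $\mathrm{Bin}(n-i,p)$), independently. The step is accepted precisely when $J\ge K$, and then the net gain is $J-K$. So I would write
\[
E[i-I'] \;=\; \sum_{j\ge k} (j-k)\,\Prob(J=j)\,\Prob(K=k).
\]
The plan is to bound this by dropping the constraint $j\ge k$ on the one-bit side but being careful on the zero-bit side. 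Concretely, $E[i-I'] \le E[J\cdot \mathds{1}\{J\ge K\}]$, and I would condition on the value of $J$: given $J=j$, acceptance requires $K\le j$, so the contribution is $j\cdot\Prob(K\le j)$. Since we want an upper bound and each additional flipped zero-bit can only be "paid for" by a flipped one-bit, the cleanest route is to bound $\Prob(K\le J\mid J=j)$ by pairing: the probability that the offspring is accepted given $j$ one-bits flip is at most the probability that among the $n-i$ zero-bits, at most $j$ flip, but I expect the tight estimate comes from treating each flipped one-bit as licensing at most one flipped zero-bit.

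The main technical step, and the expected obstacle, is to turn the accepted-progress sum into the clean closed form $ip(1-p+ip^2/(1-p))^{n-i}$. I would first factor out the leading one-bit: writing $E[J\cdot\mathds{1}\{\text{accept}\}] = \sum_{\ell=1}^i \Prob(\text{bit $\ell$ flips and step accepted})$ by linearity over the $i$ one-bits, and for a single flipping one-bit the remaining $i-1$ one-bits and the $n-i$ zero-bits must satisfy the acceptance constraint. The factor $ip$ comes from choosing and flipping one distinguished one-bit. The remaining factor $(1-p+ip^2/(1-p))^{n-i}$ should emerge as a per-zero-bit product: for each of the $n-i$ zero-bits, either it does not flip (contributing $1-p$) or it flips but is compensated by one of the remaining flipping one-bits. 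The term $ip^2/(1-p)$ is exactly $p$ (probability the zero-bit flips) times $ip/(1-p)$, the latter being a bound on the expected number of \emph{other} flipping one-bits available to compensate. The hard part will be justifying that these per-zero-bit compensation events can be bounded independently and multiplied, i.e.\ controlling the dependence introduced by the global acceptance constraint; I expect this to require a careful union-bound or a pairing/coupling argument that assigns to each flipped zero-bit a distinct flipping one-bit, so that the total overcount is at most $\sum_{\text{zero-bits}}$ of the individual compensation probabilities, yielding the product form after bounding $1-p+\binom{i}{1}p\cdot p/(1-p)$ per factor.

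Once the per-step bound is established, the lemma follows directly, so no further drift machinery is needed here; this lemma is purely the input that feeds the multiplicative-drift lower bound (Theorem~\ref{theo:multdrift-lower}) in the proof of Theorem~\ref{theo:general-lower}. I would double-check the boundary behavior: for $p=1/n$ and $i=\Theta(n)$ the factor $ip^2/(1-p)$ is $\Theta(1/n)$, so $(1-p+ip^2/(1-p))^{n-i}\approx e^{-(n-i)(p - ip^2/(1-p))}$, recovering the familiar $\approx ip\,e^{-(n-i)p}$ behavior of the expected progress on \OneMax, which is a good sanity check that the closed form is correct before committing to the detailed pairing argument.
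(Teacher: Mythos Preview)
Your setup is correct: writing $E[i-I']$ as $\sum_{j\ge k}(j-k)\Prob(J=j)\Prob(K=k)$ is exactly where the paper begins (reparametrized by the net decrease $m=j-k\ge 1$ and $K=k$). But the plan stalls at the critical step, and in two places.

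First, the intermediate bound $E[i-I']\le E[J\cdot\mathds{1}\{J\ge K\}]$ throws away the $-K$ term. In the regime that matters for the application one has $p<1/(i+1)$, and then the target satisfies $ip(1-p+ip^2/(1-p))^{n-i}<ip$; since $E[J\cdot\mathds{1}\{J\ge K\}]$ can be as large as $E[J]=ip$, this route cannot reach the stated bound without recovering the lost $-K$. Second, your per-zero-bit product heuristic is precisely the step you flag as ``the hard part,'' and you do not supply an argument for why the compensation events factor. The pairing you sketch (assigning to each flipped zero-bit a distinct flipping one-bit) introduces dependence among the zero-bit factors rather than removing it, so a union bound there will not yield a product.

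The paper bypasses both issues with a single algebraic inequality instead of a probabilistic coupling. Writing the exact sum as
\[
E[i-I']\;\le\;\sum_{m\ge 1} m \sum_{k\ge 0} \binom{i}{m+k}\binom{n-i}{k}\,p^{m+2k}(1-p)^{n-m-2k},
\]
the key step is $\binom{i}{m+k}\le i^{k}\binom{i}{m}$, which decouples the double sum into a product $S_1\cdot S_2$ with
\[
S_1=\sum_{m\ge 1} m\binom{i}{m}p^{m}(1-p)^{n-m}=ip(1-p)^{n-i},
\qquad
S_2=\sum_{k\ge 0}\binom{n-i}{k}\Bigl(\tfrac{ip^{2}}{(1-p)^{2}}\Bigr)^{k}=\Bigl(1+\tfrac{ip^{2}}{(1-p)^{2}}\Bigr)^{n-i},
\]
the second by the binomial theorem. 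Multiplying gives the claim. The ``per-zero-bit factor'' you were trying to manufacture probabilistically is exactly $S_2$ emerging from the binomial theorem; no independence or pairing argument is needed once you have the binomial-coefficient inequality.
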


\begin{proof}
Note that $I'\le i$ since the number of one-bits in the process is
non-increasing.  Hence, only mutations that flip at least as many
one-bits as zero-bits have to be considered. The event that the total
number of one-bits is decreased by~$k\ge 0$ can be partitioned into
the subevents~$F_{k,j}$ that $k+j$ one-bits and $j$ zero-bits flip,
for all $j\in \mathds{Z}^{+}_0$.  The probability of an individual
event~$F_{k,j}$ equals
\[
\binom{i}{k+j}\binom{n-i}{j} p^{k+2j}\left(1-p\right)^{n-k-2j},
\]
where $\binom{a}{b} := 0$ for $b>a$. Thus, we have
\begin{align*}
E(i-I')
  &\;\le\; 
\sum_{k=1}^{i} k \sum_{j\ge 0} \binom{i}{k+j}\binom{n-i}{j} p^{k+2j}(1-p)^{n-k-2j}\\
& \;\le\; \underbrace{\sum_{k=1}^{i} k \binom{i}{k} p^{k}(1-p)^{n-k}}_{=:S_1}
  \cdot \underbrace{\sum_{j=0}^{n-i} i^j \binom{n-i}{j} \left(\frac{p}{1-p}\right)^{2j}}_{=:S_2},
\end{align*}
where the second inequality uses $\binom{i}{k+j} \le i^j\cdot
\binom{i}{k}$. Factoring out $(1-p)^{n-i}$ of $S_1$, we recognize the
expected value of a binomial distribution with parameters $i$ and $p$,
which means $S_1=(1-p)^{n-i}\cdot ip$. Regarding~$S_2$, we apply the
Binomial Theorem and obtain $S_2=(1+i(p/(1-p))^2)^{n-i}$. The product
of $S_1$ and $S_2$ is the upper bound from the lemma.
\end{proof}

\begin{proofof}{Theorem~\ref{theo:general-lower}}
As already mentioned, we may assume that the linear function is
\OneMax and that the algorithm is the \oneoneeamu.  The idea is to
apply Theorem~\ref{theo:multdrift-lower}, which is the above-mentioned
multiplicative drift theorem for lower bounds, for a suitable choice
of the parameters. Let $\tp:=\max\{p,1/n\}$. We first observe that the
probability of flipping at least $b:=\tp n\ln n$ bits in a single step
is bounded from above by
\[
\binom{n}{\tp n\ln n}\cdot p^{\tp n\ln n} \;\le\; 
\left(\frac{e \tp n}{\tp n\ln n}\right)^{\tp n\ln n} 
\;=\; 2^{-\Omega(\tp n(\ln n)(\ln\ln n))},
\]
where we have used $p\le \tp$. Hence, the probability is
superpolynomially small.  In the following, we assume that the number
of one-bits changes by at most $b$ in each of a total number of at
most $(1-p)^{-n}n\ln n = 2^{O(\tp n)+O(\ln\ln n)}$ steps that are
considered for the lower bound we want to prove. This event holds with
probability $1-o(1)$, which, using the law of total probability,
decreases the bound only by a factor of $1-o(1)$.

Let $X^{(t)}$ denote the number of one-bits at time~$t$ and note 
that this is non-increasing over time.  We choose
$\smin:=n\tp\ln^2 n$ and $\beta:=1/\!\ln n$ and introduce
$\cmax:=1/(2\tp^2 n\ln n)$ as an additional upper bound.  Note that
$\cmax\le n/(2\ln n)$ due to $\tp\ge 1/n$. Since the $\mu$ initial
search points are drawn uniformly at random and $\mu=\poly(n)$, it
holds $X_{\mu}\ge \cmax$ with probability $1-o(1)$. Again, assuming
this to happen, we lose a factor $1-o(1)$ in the bound we want to
prove. Moreover, due to our assumption $p=O(n^{-2/3-\epsilon})$ (which
means $\tp=O(n^{-2/3-\epsilon})$), we have $b=n\tp\ln n\le 1/(4\tp^2
n\ln n) = \cmax/2$ for $n$ large enough.  Altogether, it holds
$\cmax/2 \le X_{t^*}\le \cmax$ at the first point of time~$t^*$ where
$X_{t^*}\le \cmax$. To simplify issues, we consider the process only
from time~$t^*$ on. Skipping the first~$t^*$ steps, we pessimistically
assume $s_0:=\cmax/2$ as starting point and $X^{(t)}\le\cmax$ for all
$t\ge 0$.  The second condition of the drift theorem is now fulfilled
since the bound on~$\tp$ also implies $b = \tp n\ln n \le 1/(2\tp^2n
\ln^2 n) = \beta \cmax$, where $\beta \cmax$ is the largest value for
$\beta s$ to be taken into account.

Assembling the factors from the lower bound in
Theorem~\ref{theo:multdrift-lower}, we get $\frac{1-\beta}{1+\beta} =
1-o(1)$. Furthermore, we have $\ln(s_0/\smin) = \ln(1/(4\tp^3n^2 \ln^3
n)) = \ln(1/(\tp^3 n^2)) - O(\ln\ln n)$, which is $(1-o(1))
\ln(1/(\tp^3 n^2))$ by our assumption on $\tp$. If we can prove that
$1/\delta = (1-o(1))(1-p)^{-n} (1/p)$, the proof is complete.

To bound $\delta$, we use Lemma~\ref{lem:exp-progress-upper}. Note
that $i\le \cmax$ holds in our simplified process. Using the lemma and
recalling that $1/\tp \le 1/p$, we get
\begin{multline*}
  \frac{E(X^{(t)}-X^{(t+1)}\mid X^{(t)} = i)}{i} \;\le\; p\left(1-p+\frac{\cmax p^2}{1-p}\right)^{n-\cmax}\\ 
\;\le\; p\left(1-p+\frac{1}{n\ln n}\right)^{n-\cmax} 
\;\le\; p\left((1-p)\left(1+\frac{2}{n\ln n}\right)\right)^{n-\cmax} \\
\;=\; (1+o(1)) p (1-p)^n,
\end{multline*}
where we have used $p\le 1/2$ and $(1+2/(n\ln n))^{n}=1+o(1)$ and
$(1-p)^{-\cmax} = (1-p)^{-1/(2\tp^2 n\ln n)}=1+o(1)$.  Hence,
$1/\delta\ge (1-o(1))(1/p)(1-p)^{-n}$ as suggested, which completes
the proof.
\end{proofof}

Finally, we remark that the expected optimization time of the
\oneoneea with $p=1/n$ on \OneMax is known to be $en\ln n-\Theta(n)$
\citep{DFW11}. Hence, in conjunction with Theorems~\ref{theo:refined}
and~\ref{theo:onemax-easiest}, we obtain for $p=1/n$ that the expected
optimization time of the \oneoneea varies by at most an additive term
$\Theta(n)$ within the class of linear functions.

\section*{Conclusions}
We have presented new bounds on the expected optimization time of the
\ea on the class of linear functions. The results are now tight up to
lower-order terms, which applies to any mutation probability $p=O((\ln
n)/n)$. This means that $1/n$ is the optimal mutation probability on any
linear function.  We have for the first time studied the case
$p=\omega(1/n)$ and proved a phase transition from polynomial to
exponential running time in the regime $\Theta((\ln n)/n)$. The lower
bounds show that \OneMax is the easiest linear function for all $p\le
1/2$, and they apply not only to the \oneoneea but also to the large
class of mutation-based EAs. They so exhibit the \oneoneea as optimal
mutation-based algorithm on linear functions. The upper bounds hold
with high probability. As proof techniques, we have employed
multiplicative drift in conjunction with adaptive potential
functions. In the future, we hope to see these techniques applied to
the analysis of other randomized search heuristics.

We finish with an open problem. Even though our proofs of upper bounds
would simplify for the function \BV, this function is often considered
as a worst case. Is it true that the runtime of the \ea on \BV is
stochastically largest within the class of linear functions, thereby
complementing the result that the runtime on \OneMax is stochastically
smallest?

\section*{Acknowledgments}
The author thanks Benjamin Doerr, Timo K\"otzing, Per Kristian Lehre
and Carola Winzen for constructive discussions on the subject and for
ruling out an early proof attempt. Moreover, he thanks Daniel
Johannsen for proofreading a draft of the Sections~\ref{sec:upper}
and~\ref{sec:refined} and pointing out a simplification of the proof
of Theorem~\ref{theo:general-upper}. Finally, he thanks Dirk Sudholt, 
who suggested to study the class of mutation-based EAs.

\begin{appendix}
\section{Multiplicative Drift for Lower Bounds}
In this appendix, we supply the proof of Theorem
\ref{theo:multdrift-lower}, the lower-bound version of the
multiplicative drift theorem.  The proof follows the one of Theorem~5
in \citet{LWECCC10} and uses the following additive drift theorem.
\begin{theorem}[\cite{Jens2007ES}]\label{thm:pol-drift}
  Let $X^{(1)}, X^{(2)},\dots$ be random variables with bounded support
  and let $T$ be the stopping time defined by 
  $T:=\min\{t\mid X^{(1)}+\dots+X^{(t)}\ge g\}$ for a given $g>0.$ If
  $\expect{T}$ exists and $\expect{X^{(i)}\mid T\ge i}\le u$ for
  $i\in\mathds{N}$, then $\expect{T}\ge g/u$.
\end{theorem}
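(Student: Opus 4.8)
The plan is to squeeze the expected accumulated sum $\expect{S_T}$, where $S_t := X^{(1)}+\dots+X^{(t)}$ denotes the partial sums, between $g$ from below and $u\,\expect{T}$ from above, and then read off the claim. For the lower estimate, note that since $\expect{T}$ is assumed to exist it is in particular finite, so $T<\infty$ almost surely; by the defining property of the stopping time $T=\min\{t\mid S_t\ge g\}$ we then have $S_T\ge g$ with probability~$1$, whence $\expect{S_T}\ge g$. The whole argument thus reduces to establishing $\expect{S_T}\le u\,\expect{T}$ and combining the two inequalities.

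For the upper estimate I would first rewrite the sum up to the stopping time as $S_T=\sum_{i=1}^{\infty}X^{(i)}\,\mathds{1}\{T\ge i\}$; this is exact because the event $\{T\ge i\}$ equals $\{S_1<g,\dots,S_{i-1}<g\}$ and therefore flags precisely those indices~$i$ that are summed before stopping. The bounded-support hypothesis, say $\card{X^{(i)}}\le M$ for all~$i$, produces the dominating bound $\sum_{i\ge 1}\card{X^{(i)}}\,\mathds{1}\{T\ge i\}\le M\,T$ with $\expect{M\,T}<\infty$, which licenses exchanging expectation and summation (Fubini, or dominated convergence). This yields $\expect{S_T}=\sum_{i=1}^{\infty}\expect{X^{(i)}\,\mathds{1}\{T\ge i\}}=\sum_{i=1}^{\infty}\expect{X^{(i)}\mid T\ge i}\cdot\prob{T\ge i}$.

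At this point the drift hypothesis enters directly: because $\expect{X^{(i)}\mid T\ge i}\le u$ and $\prob{T\ge i}\ge 0$, each summand is at most $u\,\prob{T\ge i}$, so $\expect{S_T}\le u\sum_{i=1}^{\infty}\prob{T\ge i}=u\,\expect{T}$, where the final equality is the standard tail-sum identity for the nonnegative integer-valued variable~$T$. Chaining the two bounds gives $g\le\expect{S_T}\le u\,\expect{T}$, and dividing by the positive~$u$ delivers $\expect{T}\ge g/u$ as desired.

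The steps I expect to demand the most care are the pointwise inequality $S_T\ge g$ and the interchange of expectation with the infinite sum: both hinge on $T<\infty$ almost surely, which is guaranteed solely by the assumption that $\expect{T}$ exists. The conceptual crux that makes everything fit is that $\{T\ge i\}$ is determined by $X^{(1)},\dots,X^{(i-1)}$ alone, so conditioning on it before averaging $X^{(i)}$ is exactly the quantity the drift condition bounds; no independence of $X^{(i)}$ from the past is required, only the stated conditional estimate.
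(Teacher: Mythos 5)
Your proof is correct. There is, however, nothing in the paper to compare it against: Theorem~\ref{thm:pol-drift} is imported by citation from \citet{Jens2007ES} and used as a black box in the appendix, so the paper contains no proof of this statement. Your argument is the standard truncation proof of a Wald-type inequality, and the delicate points are all handled: the identity $S_T=\sum_{i\ge 1}X^{(i)}\mathds{1}\{T\ge i\}$ holds pointwise on $\{T<\infty\}$, which has full probability once $\expect{T}<\infty$; the interchange of expectation and infinite sum is licensed by dominated convergence with dominating variable $M\,T$, integrable precisely because $\expect{T}$ exists; and the drift hypothesis enters exactly in the form stated, via $\expect{X^{(i)}\,\mathds{1}\{T\ge i\}}=\expect{X^{(i)}\mid T\ge i}\cdot\prob{T\ge i}$, so indeed no independence from the past is required. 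Two small points are worth tightening. First, ``bounded support'' must be read as a bound $M$ that is \emph{uniform} in $i$; if the bounds were allowed to grow with $i$, your dominating variable would no longer be $M\,T$ and the interchange step would need a different justification, so state this reading explicitly. Second, the final division presupposes $u>0$, which is not among the hypotheses; it is, however, forced by your own chain $0<g\le u\,\expect{T}$ whenever $\expect{T}$ is finite (and when $\expect{T}=\infty$ the conclusion is vacuous), so a one-line remark closes this. Also note that for indices with $\prob{T\ge i}=0$ the conditional expectation in the hypothesis is undefined, but the corresponding summand vanishes, so the estimate is unaffected.
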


The proof of Theorem~\ref{theo:multdrift-lower} also makes use of the
following simple lemma.
\begin{lemma}\label{lem:cond-exp-bound}
  Let $X$ be any random variable, and $k$ any real number. If 
  it holds that $\prob{X<k}>0$, then $\expect{X}\ge \expect{X\mid X<k}$.
\end{lemma}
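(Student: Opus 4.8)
The plan is to decompose $\expect{X}$ via the law of total expectation over the complementary events $\{X<k\}$ and $\{X\ge k\}$, and then exploit the obvious fact that conditioning on $X\ge k$ cannot pull the expectation below what conditioning on $X<k$ gives.

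First I would set $p:=\prob{X<k}$, which is positive by hypothesis. Assuming for the moment that $\prob{X\ge k}=1-p>0$, so that the second conditional expectation below is well defined, I would split
\[
\expect{X} \;=\; \expect{X\mid X<k}\cdot p \;+\; \expect{X\mid X\ge k}\cdot(1-p).
\]

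Next I would invoke the two trivial sandwiching bounds $\expect{X\mid X<k}\le k$ and $\expect{X\mid X\ge k}\ge k$, which hold because under each conditioning the variable lies almost surely on the corresponding side of~$k$. Chaining them yields $\expect{X\mid X\ge k}\ge \expect{X\mid X<k}$. Substituting this lower bound for $\expect{X\mid X\ge k}$ in the decomposition and using $1-p\ge 0$ collapses the right-hand side to $\expect{X\mid X<k}\cdot p + \expect{X\mid X<k}\cdot(1-p) = \expect{X\mid X<k}$, which is exactly the claimed inequality.

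The only point requiring a moment's care—and the closest thing to an obstacle in an otherwise elementary argument—is the degenerate case $\prob{X<k}=1$, in which $\expect{X\mid X\ge k}$ is undefined and the decomposition above is vacuous. I would dispatch this case separately at the outset: there the inequality holds with equality, since $\expect{X}=\expect{X\mid X<k}$ directly. Having handled it, I carry out the main computation under the standing assumption $1-p>0$.
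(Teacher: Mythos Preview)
Your proof is correct and follows essentially the same route as the paper: decompose $\expect{X}$ via the law of total expectation over $\{X<k\}$ and $\{X\ge k\}$, then use $\expect{X\mid X\ge k}\ge k\ge \expect{X\mid X<k}$ to collapse the sum. The only point the paper adds that you omit is an explicit treatment of the cases $\expect{X}=\pm\infty$ before invoking the decomposition; this is a minor technicality and irrelevant to how the lemma is actually applied later (to bounded random variables).
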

\begin{proof}
  Define $p:=\prob{X<k}$ and $\mu_k:=\expect{X\mid X< k}$. The lemma
  clearly holds when $p=1$ such that we assume $0<p<1$ in the
  following. If $\expect{X}$ is positive infinite then $\expect{X}\ge
  \mu_k$ is obvious. If $\expect{X}$ is negative infinite then so is
  $\mu_k$ by the law of total probability. Finally, for finite
  $\expect{X}$, the law of total probability yields
\begin{align*}
  \expect{X}
  & \;=\; (1-p)\cdot \expect{X\mid X\ge k} + p\cdot \mu_k
   \;\ge\; (1-p)\cdot k + p\cdot \mu_k\\
  & \;>\; (1-p)\cdot \mu_k + p\cdot \mu_k
    \;=\; \expect{X\mid X< k}.
\end{align*}
\end{proof}

\begin{proofof}{Theorem~\ref{theo:multdrift-lower}}
The proof generalizes the proof of Theorem~1 in \citet{DFW10}.  The
random variable $T$ is non-negative. Hence, if the expectation of $T$
does not exist, then it is positive infinite and the theorem holds.
We condition on the event $T>t$, but we omit stating this event in the
expectations for notational convenience.  We define the stochastic
process $Y^{(t)}:=\ln(X^{(t)})$ (note that $X^{(t)}\ge 1$), and apply
Theorem~\ref{thm:pol-drift} with respect to the random variables
\begin{align*}
  \Delta_{t+1}(s) \;:=\; \left(Y^{(t)}-Y^{(t+1)}\mid X^{(t)}=s\right) \;=\; 
  \left(\mathord{\ln}\mathord{\left(\frac{s}{X^{(t+1)}}\right)}\mid  X^{(t)}=s\right).
\end{align*}
We consider the time until $X^{(t)}\le \smin$ if $X^{(0)} = s_0$ and
use the parameter $g:=\ln (s_0/\smin)$. By the law of total
probability, the expectation of $\Delta_{t+1}(s)$ can be expressed
as
\begin{multline}
    \prob{s-X^{(t+1)}\ge \beta s}\cdot
      \expect{\Delta_{t+1}(s)\mid s-X^{(t+1)}\ge \beta s} \\
      + \prob{s-X^{(t+1)} < \beta s}\cdot
        \expect{\Delta_{t+1}(s)\mid s-X^{(t+1)} < \beta s}.
\label{eq:2}
\end{multline}
By applying the second condition from the theorem, the first term
in~\eqref{eq:2} can be bounded from above by $\frac{\beta\delta}{\ln
  s} \cdot \ln s = \beta\delta$. The logarithmic function is
concave. Hence, by Jensen's inequality, the second term
in~\eqref{eq:2} is at most
\begin{multline*}
  \mathord{\ln} \mathord{\left(\expect{\frac{s}{X^{(t+1)}}\mid s-X^{(t+1)} < \beta s \wedge X^{(t)}=s}\right)}\\
   \;=\; \mathord{\ln}\mathord{\left(1+\expect{\frac{s-X^{(t+1)}}{X^{(t+1)}}\mid s-X^{(t+1)} < \beta s\wedge
     X^{(t)}=s}\right)}.
\end{multline*}

By using the inequality $\ln(1+x)\le x$ as well as the conditions $X_{t+1}\ge
(1-\beta) s$ and $X_{t+1}\le X_t$, this simplifies to
\begin{multline*}
  \expect{\frac{s-X^{(t+1)}}{X^{(t+1)}}\mid s-X^{(t+1)} < \beta s \wedge X^{(t)}=s}\\
\;<\;   \expect{\frac{s-X^{(t+1)}}{(1-\beta) s}\mid s-X^{(t+1)} < \beta s \wedge X^{(t)}=s}.
\end{multline*}
By Lemma~\ref{lem:cond-exp-bound} and the first condition from the
theorem, it follows that the second term in~\eqref{eq:2} is at most
\begin{align*}
  \expect{\frac{s-X^{(t+1)}}{(1-\beta) s}\mid X^{(t)}=s} \;\le\; \frac{\delta}{1-\beta}.
\end{align*}

Altogether, we obtain $\expect{\Delta_{t+1}(s)}\le
(\beta+1/(1-\beta))\delta \le ((\beta+1)/(1-\beta))\delta$.  From
Theorem~\ref{thm:pol-drift}, it now follows that
\begin{align*}
\expect{T\mid X^{(0)}=s_0} \;\ge\;
 \frac{1}{\delta}\cdot \frac{1-\beta}{1+\beta}\cdot \mathord{\ln}\mathord{\left( \frac{s_0}{\smin}\right)}.
\end{align*}
\end{proofof}

\end{appendix}

\end{document}